\DeclareMathOperator*{\argmax}{arg\,max}
\newtheorem{theorem}{Theorem}
\newtheorem{lemma}{Lemma}[section]
\newtheorem{definition}[lemma]{Definition}
\newcommand{\para}[1]{\vspace{2pt}\noindent\textbf{{#1}}}
\newcommand{\ignore}[1]{}
\def\ourmethod{DPAdapter}
\begin{document}
\date{}

\title{\Large \bf DPAdapter: Improving Differentially Private Deep Learning through \\ Noise Tolerance Pre-training}

\author{
    {\rm Zihao Wang\textsuperscript{1}\thanks{The first two authors contributed equally to this work.}~,
    Rui Zhu\textsuperscript{1}\footnotemark[1]~,
    Dongruo Zhou\textsuperscript{1},
    Zhikun Zhang\textsuperscript{3},
    John Mitchell\textsuperscript{2},
    } \\
    {\rm Haixu Tang\textsuperscript{1},
    and XiaoFeng Wang\textsuperscript{1}
    }
    \\
    \textsuperscript{1}Indiana University Bloomington
    ~\textsuperscript{2}Stanford University \\
    \textsuperscript{3}CISPA Helmholtz Center for Information Security
}

\maketitle

\begin{abstract}
Recent developments have underscored the critical role of \textit{differential privacy} (DP) in safeguarding individual data for training machine learning models. 
However, integrating DP oftentimes incurs significant model performance degradation due to the perturbation introduced into the training process, presenting a formidable challenge in the {differentially private machine learning} (DPML) field.
To this end, several mitigative efforts have been proposed, typically revolving around formulating new DPML algorithms or relaxing DP definitions to harmonize with distinct contexts. 
In spite of these initiatives, the diminishment induced by DP on models, particularly large-scale models, remains substantial and thus, necessitates an innovative solution that adeptly circumnavigates the consequential impairment of model utility.

In response, we introduce \ourmethod{}, a pioneering technique designed to amplify the model performance of DPML algorithms by enhancing parameter robustness.
The fundamental intuition behind this strategy is that models with robust parameters are inherently more resistant to the noise introduced by DP, thereby retaining better performance despite the perturbations. 
\ourmethod{} modifies and enhances the sharpness-aware minimization (SAM) technique, utilizing a two-batch strategy to provide a more accurate perturbation estimate and an efficient gradient descent, thereby improving parameter robustness against noise. 
Notably, \ourmethod{} can act as a plug-and-play component and be combined with existing DPML algorithms to further improve their performance.
Our experiments show that \ourmethod{} vastly enhances state-of-the-art DPML algorithms, increasing average accuracy from 72.92\% to 77.09\% with a privacy budget of $\epsilon=4$.
\end{abstract}

\section{Introduction}
Recent years have witnessed an exponential growth in applications of \textit{deep neural networks} (DNNs) to various domains~\cite{ResNet, RCNN, NLP}.
However, DNN models trained with standard pipelines can be attacked by an adversary that seeks to reveal the data on which the model was trained. 
For example, Carlini et al.~\cite{Extracting} show that adversaries can generate and detect text sequences from the training set of a large transformer language model, while Balle et al.~\cite{Reconstructing} demonstrate that powerful adversaries can reconstruct images from the training set of a classifier. 
Alongside other results~\cite{ML-Doctor, First_Principles, Label-Only}, these studies indicate that models trained on sensitive datasets present a significant privacy risk.
\textit{Differential privacy} (DP)~\cite{DP} has been the golden standard for effective control of the risks of exposing training examples, and has already been adopted by various machine learning tasks~\cite{DP_finetune, DP_Transfer, LLM_DP, Unlocking} for protection. 
A differentially private algorithm is a randomized algorithm providing a formal guarantee that any single example in the training set only has a limited impact on the output distributions of the algorithm.
The privacy guarantee, denoted ($\epsilon$, $\delta$)-DP, is defined by two parameters ($\epsilon$, $\delta$), which we refer to as the \textit{privacy budget}. 
The smaller these two parameters are, the closer the output distributions between the training sets that differ by a single example, and therefore the more difficult it becomes for an adversary to infer whether any single data point is included during training.

\para{Challenges in Private Deep Learning.} 
\textit{Differentially private stochastic gradient descent} (DP-SGD)~\cite{DPSGD} stands out as a prevalent DPML technique. DP-SGD modifies the conventional mini-batch gradient calculation used in SGD by incorporating a privatized version, whereby the gradient of each training sample is clipped to a maximum norm. 
Subsequently, Gaussian noise, proportional to the clipping norm, is introduced to the sum of the clipped gradients,  masking the influence of any individual example on the sum. 
However, the incorporation of the DP noise often comes with a notable degradation in the performance of trained models~\cite{DPSGD, more_data}. 
In response, efforts to mitigate the adverse impact of DP on model utility have been made, either through formulating new algorithms~\cite{Semi-supervised_DP, more_data, Overbill, Private-kNN} or by relaxing the definition of DP to align with particular contexts~\cite{Theory_meets_Practice, New_Settings, label_DP}. 
Nonetheless, the detriment imposed by DP noise on differentially private models remains significant.

\para{Our Solution.} 
To mitigate the adverse impact of DP on model utility, prior research has underscored that transfer learning~\cite{transfer} from public data markedly enhances the model performance of DPML algorithms~\cite{Pretrain_DP}. 
In our research, we went a step further. 
Specifically, we observed that the deep learning training process is highly sensitive to noise, with even a small amount of noise leading to a substantial impact on performance. 
However, this issue can be mitigated by the model pre-trained to minimize parameter sensitivity to noise, which we call \textit{parameter robustness}, so as to control DP's performance impact. 
Therefore, we designed a new technique that pre-trains a model with its parameters robust to noise to enhance the performance of the downstream model fine-turned with DPML on private data. 

A technique that could serve this purpose is \textit{sharpness-aware minimization} (SAM), which augments parameter robustness~\cite{sam,AWP,AMP} for the purpose of enhancing a model's generality.  
This technique, however, turns out to be less effective in controlling the performance impact of DPML, since it is not designed for maximizing parameter robustness. 
Concretely, SAM adds the worst-case perturbation to parameters before computing gradients and later removes the perturbation after each round of parameter updates.  
For this purpose, it calculates both the perturbation and gradients on a small batch of training samples. 
A problem is that although a small batch could be enough for improving a model's generality, it is inadequate for making an accurate estimate of the worst-case perturbation, thereby rendering the model parameters less robust than they could be. 
To address this issue, we enhanced SAM for our purpose with a new technique called \ourmethod{}, which utilizes two batches of training instances, a large one for a more accurate estimate of the perturbation and a small one for effective gradient descent to ensure convergence.  
We further theoretically analyzed why this approach improves parameter robustness against noise, thereby reducing DPML's performance impact on the downstream model fine-tuning.   

\para{Empiricaly Results.} 
We implemented \ourmethod{} and evaluated its performance using a model pre-trained on CIFAR-100, which was then fine-tuned for classification tasks on CIFAR-10, SVHN, and STL-10 datasets. 
In these experiments, we utilized \ourmethod{} as the pre-training method and the CIFAR-100 dataset as the public dataset for the pre-training. 
The downstream tasks involve fine-tuning the pre-trained model with DPML. 
Our experimental results consistently show a stable improvement in the accuracy achieved by the downstream tasks, compared with those fine-tuned from the models without robustness enhancement. 
For instance, when the privacy budget is set to $\epsilon=4$ and DPML configured to DP-SGD (a popular setting), \ourmethod{} elevates the average accuracy across three downstream tasks to 77.09\%. 
By comparison, when utilizing the pre-trained models without \ourmethod{}, the accuracy is 72.92\%, over 4\% below that of our approach. 
Note that all existing DPML enhancements could only achieve a performance gain no more than 3\%~\cite{DPMLBench} when the privacy budget is set to $\epsilon=4$. 
Also, many of them cannot work together, since they all focus on the downstream fine-tuning step. Our approach, however, is designed for more generic protection, independent of tasks, and compatible with these existing solutions, given its focus on the pre-training step.    

\para{Contributions.}
Our key contributions are outlined below:

\vspace{2pt}\noindent$\bullet$\textit{~New technique}.
We developed \ourmethod{}, a new technique for enhancing parameter robustness, which leads to a general solution significantly outperforming yet compatible with existing techniques for controlling DPML's negative impacts. 

\vspace{2pt}\noindent$\bullet$\textit{~Theoretical understandings}.
We theoretically analyzed our solution to justify its effectiveness, unveiling intrinsic relations among parameter robustness, transferability, and DPML's performance impacts. Our analysis leads to new insights about how a pre-trained model can be designed to maximize the benefits of DPML.

\vspace{2pt}\noindent$\bullet$\textit{~Extensive empirical studies}.
We conducted comprehensive empirical studies of \ourmethod{} on various downstream tasks and DPML algorithms under different privacy budgets.
Our results show that our approach effectively enhances the performance of the downstream tasks fine-tuned by DPML.

\para{Roadmap}. 
The rest of the paper is organized as follows. \autoref{sec:background} presents the background of our research; \autoref{sec:Problem Formulation and Key Observations} presents our key observations; \autoref{sec:method} elaborates our method and theoretic analysis; \autoref{sec:eval} reports the evaluation of our techniques; \autoref{sec:discussion} discusses the limitations of our study and potential future work; \autoref{sec:related} reviews the related prior studies and \autoref{sec:conclusion} concludes the paper.
\vspace{-2pt}
\section{Background}
\label{sec:background}
\vspace{-1pt}
\subsection{Transfer Learning}
In the deep learning landscape, a particularly effective approach to model training involves a two-step process: \textit{pre-training} and \textit{fine-tuning}. 
This methodology, central to \textit{transfer learning}~\cite{transfer, transfer_theory}, allows practitioners to leverage the power of large-scale data and subsequently specialize a model for domain-specific tasks.

Pre-training serves as the first phase of transfer learning, often leveraging unsupervised or self-supervised learning to train a model on a large, general dataset~\cite{ViT, iGPT, Momentum, MoCoV2, MoCoV3}. The objective is to extract useful, high-level features from the data without necessarily focusing on a specific task.
The pre-trained model acts as a feature extractor and captures generalized knowledge from the initial dataset. This generalization facilitates transfer learning, where the knowledge gained from one task can be transferred to a different but related task.

The second phase, fine-tuning, builds upon the pre-trained model to specialize it for a specific task. In this phase, the model is trained on a smaller, labeled dataset that is directly related to the target task. The key innovation here is the transfer of knowledge from the pre-training phase; instead of initializing the neural network with random weights, the model starts with the weights obtained during pre-training. Because the model has already learned relevant features, it typically requires less data and fewer epochs to fine-tune effectively. Fine-tuning essentially adapts the generalized learning from the pre-training phase to a specific task, demonstrating the power of transfer learning to yield high performance with comparatively less data and computational effort.

\vspace{-1pt}
\subsection{Differentially Private Machine Learning}

\textit{Differential privacy}~\cite{DP} (DP) serves as a mathematical structure intended to quantify the assurance of privacy within data analysis, providing insights into the preservation of privacy when an individual’s data becomes part of a publicly analyzed dataset. 
This concept typically involves a privacy budget, represented by $\epsilon$, where a larger value signifies augmented privacy protection, and an optional failure probability, $\delta$, defines the permissible deviation from impeccable privacy.
Formally, we can define differential privacy as:

\begin{definition}[($\epsilon$, $\delta$)-DP] 
Given two neighboring datasets $D$ and $D'$ differing by one record, a mechanism $\mathcal{M}$ satisfies ($\epsilon$, $\delta$)-differential privacy if
$$ 
Pr[\mathcal{M}(D)\in S] \leq e^{\epsilon } \cdot Pr[\mathcal{M}(D') \in S] + \delta,
$$
where $\epsilon$ is the privacy budget, and $\delta$ is the failure probability.
\end{definition}

\para{Gaussian Mechanism.}
There are several approaches for designing mechanisms that satisfy $(\epsilon, \delta)$-differential privacy, among which the Gaussian mechanism is the most widely used one.
It computes a function $f$ on the dataset $D$ in a differentially private manner, by adding to $f(D)$ a random noise. 
The magnitude of the noise depends on $\Delta_f$, the \emph{global sensitivity} or the $\ell_2$ sensitivity of $f$.  
Such a mechanism $\mathcal{M}$ is given below:
$$
\begin{array}{crl}
& \mathcal{M}(D) & =f(D)+\mathcal{N}\left(0, \Delta_f^2 \sigma^2 \mathbf{I} \right)
\end{array}
$$
where $\Delta_f  = \max\limits_{(D,D') : D \simeq D'} || f(D) - f(D')||_2$, $\mathcal{N} (0, \Delta_f^2 \sigma^2 \mathbf{I})$ denotes a multi-dimensional random variable sampled from the normal distribution with mean $0$ and standard deviation  $\Delta_f \sigma$, and $\sigma=\sqrt{2\ln\frac{1.25}{\delta}}/\epsilon$.

\para{DP-SGD.}
The integration of differential privacy into deep learning aims to build models that can learn from data without compromising the privacy of individuals within the dataset.
Differentially private stochastic gradient descent (DP-SGD)~\cite{DPSGD} is the most widely used algorithm to enforce DP guarantee for the deep learning models. 
It adapts the standard SGD algorithm by introducing a few privacy-preserving modifications: \textit{gradient clipping} and \textit{noise addition}.

The gradient clipping operation aims to limit the sensitivity of each gradient, ensuring that a single data point does not unduly influence the model's learning process.
Following gradient clipping, Gaussian noise is added to the clipped gradients before they are used to update the model parameters. 
This noise ensures that the exact values of the gradients—which could reveal sensitive information—are masked.

\para{Privacy Budget Composition.}
On a broader spectrum, the overall privacy budget calculation of DP-SGD is established by demonstrating the privacy budget of each iteration under certain $(\epsilon, \delta)$ values, followed by applying amplification by subsampling and composition throughout the iterations.

\vspace{-1pt}
\subsection{Adversarial Robustness and Model Parameter Robustness}
\label{subsec:Adversarial Robustness and Model Parameter Robustness}

\para{Adversarial Training (AT).} 
It is widely regarded as the most effective defense method against adversarial attacks~\cite{AE_1, Adversarial, Intriguing, Limitations_AE, Evasion}.
AT essentially employs a ``fight fire with fire'' approach. It introduces adversarial examples into the training set, which are slight, carefully calculated modifications of the original inputs. These modifications are designed to mislead the model into making erroneous predictions. By learning from these adversarial examples, the model improves its ability to correctly classify such inputs in the future, thereby enhancing its overall resilience and robustness against adversarial attacks.

\para{Parameter Robustness.} 
It typically refers to the sensitivity of a model to small perturbations in its parameters. 
When discussing parameter robustness, we are interested in the model output (e.g., predictions on testing data) changes when its parameters are subjected to some level of noise. 

Specifically, given a model \( f \) parameterized by \( \theta \) and a test sample \( x \), the robustness of the model with respect to its parameters can be defined as:

\begin{equation*}
\label{eq:para robust}
    \rho(f) = \max_{\boldsymbol{x}, \theta, \Delta}|f_{\theta+\Delta}(\boldsymbol{x}) - f_{\theta}(\boldsymbol{x})|/\|\Delta\|_2
\end{equation*}

\noindent where $\rho(f)$ represents the maximum change in the model's output.
 \( \Delta \) is the noise added to the model parameters \( \theta \).
 \( \| \cdot \| \) denotes a norm, for instance, the L2 norm.

This definition provides a measure of the potential variation in the model's output when its parameters undergo perturbation due to noise. 
A diminutive value of \( \rho \) is indicative of pronounced parameter robustness. Subsequently, we aim to \textit{enhance} the parameter robustness of \(f\), denoted by the objective of reducing the value of \(\rho(f)\).

\para{Robust Accuracy.}
We resort to \textit{robust accuracy} as an evaluation metric for parameter robustness in the classification paradigm; Given a specific model \( f_{\theta} \), we introduce Gaussian noise, represented as \( N(0, 0.1) \), uniformly across parameters spanning all layers. The resultant perturbed model is represented as \( f_{\hat{\theta}} \). 
The model's performance, when faced with this perturbation, is then assessed. Consequently, the robust accuracy is computed over a dataset \( D \) and is delineated as:

\begin{equation*}
    \text{Robust Accuracy} = \frac{1}{|D|}\sum_{x \in D} I(f_{\hat{\theta}}(x) = y)
\end{equation*}

\noindent where \( I \) is the indicator function that returns 1 when the condition inside is true and 0 otherwise.
A model exhibiting a higher robust accuracy indicates stronger parameter robustness, corresponding to a lower value of \(\rho\).

The optimization of parameter robustness can be achieved through a technique known as \textit{sharpness-aware minimization} (SAM)~\cite{sam,AWP,AMP}. This method typically involves computing the worst-case perturbation on the parameters. The optimization then takes place on the model parameters subjected to this worst-case perturbation. The intention is to ensure that the model retains a significant portion of its performance, even when subjected to the worst-case perturbation. However, this technique is not designed to maximize parameter robustness but to achieve better generality. Consequently, it proves to be less effective in mitigating the performance impact of DPML (\autoref{subsec:results}).

\section{Problem Formulation and Key Observations}
\label{sec:Problem Formulation and Key Observations}

\subsection{Problem Formulation}
\label{subsec:Problem Formulation}

\para{Threat Model.} 
We consider a scenario where an attacker applies privacy attacks~\cite{Extracting, Reconstructing, ML-Doctor, First_Principles, Label-Only} to unveil the training data of a model. 
To defend against the attacks, the defender endeavors to build the model using differential privacy, which provably protects sensitive data while preserving its original utility. Additionally, the defender possesses a public dataset related to the private data, aligning with the standard assumption of transfer learning that the source and the target datasets follow similar but not identical distributions.

\para{Problem Definition.} 
Our aim is to build a pre-trained model that can be used for improving both the accuracy and privacy protection of the models trained for downstream tasks. Specifically, to build a pre-trained model that serves as a beneficial initialization for DP fine-tuning, we seek to answer two key questions: 1) what properties the pre-trained model should exhibit for the benefit of downstream models? and consequently 2) how the pre-training of the model should be executed to acquire these properties?

\subsection{Motivation and Key Observations}
\label{subsec:Key Observations}

\para{Motivation.}
Our motivation to enhance the performance of the DPML algorithm using pre-trained models is rooted in our understanding of adversarial robustness training. The traditional adversarial training (AT) incrementally conditions the model to resist perturbations of its input. However, in the DPML algorithm, the noise is predominantly introduced into the gradients, or equivalently, into the model's parameters.
Hence, we ask if there is any technique similar to AT that may condition the model to resist perturbations of its parameters. Obviously, if this can be achieved, the impact of the noise introduced by the DPML algorithm on the model performance could be minimized. It turns out, such an AT method has been proposed previously, referred to as the {\em sharpness aware
minimization (SAM)}, which confers noise resilience to parameters \autoref{subsec:Adversarial Robustness and Model Parameter Robustness}.
Based on the premise, we introduce two working hypotheses.

We first consider a specialized transfer learning scenario where the training datasets for the upstream and downstream tasks are i.i.d. (independent and identically distributed); for instance, each of these training datasets represent a random sub-sample of an overarching dataset. Under this scenario, we propose the first hypothesis:

\para{Hypothesis 1.}
\textit{Assuming the datasets $A$ and $B$ are randomly sampled from the same distribution, if a pre-trained model built from the dataset $A$ possesses enhanced parameter robustness, the model fine-tuned on the dataset $B$ using the DPML algorithm under a fixed privacy budget will yield better performance.}

Next, we extend our hypothesis into a more general transfer learning scenario, where the upstream and downstream datasets are not identical but related, for example, when the upstream training data comes from ImageNet and the downstream data is sourced from CIFAR-10. Under this scenario, we have our second hypothesis.

\para{Hypothesis 2.}
\textit{Assuming the datasets $A$ and $B$ are sampled from two different but similar distributions, respectively, if a pre-trained model built from the dataset $A$ exhibits strong parameter robustness, the model fine-tuned on the dataset $B$ using the DPML algorithm under a fixed privacy budget will yield better performance.}

\para{Rationales.}
Hypothesis 1 is quite intuitive. Drawing parallels from AT — which can counteract the effects of adversarial examples by introducing noise into the input — we anticipate that SAM can similarly mitigate the implications of the noise added to parameters by the DPML algorithm. 

Hypothesis 2, on the other hand, is inspired by prior work~\cite{qu2023reaas}. It has been observed that pre-trained models, when subjected to AT, can impart some degree of their acquired adversarial robustness to downstream models via transfer learning. This led us to a deeper contemplation: Can the parameter robustness garnered through SAM training also be transferred to downstream tasks through transfer learning? If this transfer of parameter robustness is indeed feasible, then, based on Hypothesis 1, it could potentially alleviate the performance degradation in downstream training using the DPML algorithm.

\para{Empirical Validation.}
To evaluate our hypotheses, we utilized state-of-the-art Adversarial Model Perturbation (AMP) methodologies. Our models were exposed to an array of AMP intensities, producing pre-trained models that span a spectrum of parameter robustness. We then measured their parameter robustness by assessing the model accuracy on testing data after perturbation, and analyzed their performance in conjunction with DP-SGD.

\begin{figure}[!tbp]
\centerline{\includegraphics[width=\linewidth]{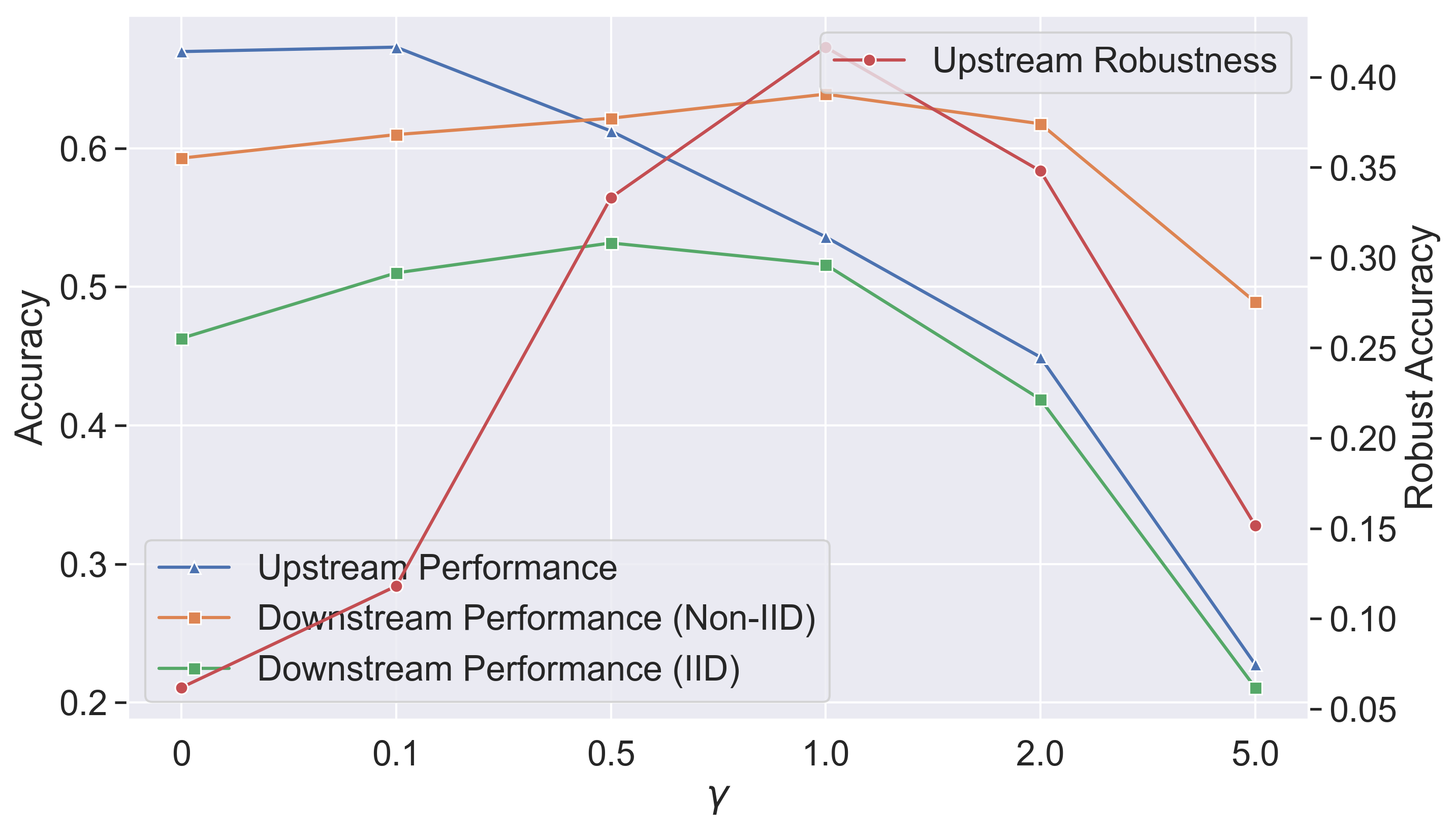}}
\caption{Impact of perturbation magnitude on AMP. }
\label{fig:gamma_amp}
\end{figure}

Specifically, to validate Hypothesis 1, we examined a scenario where both upstream and downstream training data were sourced from the CIFAR-100 dataset, with the upstream data constituting 90\% of the CIFAR-100 training dataset and the downstream data constituting the remaining 10\%. As depicted in \autoref{fig:gamma_amp}, the x-axis signifies the parameter $\gamma$ in AMP, which modulates the intensity of parameter robustness. By adjusting $\gamma$, we derived pre-trained models of various levels of robustness (indicated by the red line). This robustness peaks with increasing $\gamma$ and then starts to wane. Correspondingly, the green line, which illustrates the accuracy of the model's performance after downstream optimization using DP-SGD, displays an initial rise followed by a decline, mirroring the trajectory of parameter robustness. This result suggests that if the upstream and downstream training datasets both follow an i.i.d. distribution, the parameter robustness of upstream model is directly correlated with the performance of the downstream fine-tuned model.

For Hypothesis 2, we used the upstream training data comprising the same 90\% of the CIFAR-100 dataset, while the downstream training data was entirely from the CIFAR-10 dataset. Still referencing \autoref{fig:gamma_amp}, the blue line indicates the performance (accuracy) of the downstream model after DP-SGD optimization. Consistent with the results above, the blue line showcases an initial increase in performance of the downstream model followed by a significant drop, which is highly consistent with the pattern of the parameter robustness. This result suggests that in a transfer learning scenario, the parameters robustness of an upstream pre-trained model can influence the performance of the downstream model trained using the DPML algorithm. Additionally, we observed a gradual decline in the accuracy of the upstream model with an increase in its parameter robustness, indicating a trade-off between the parameter robustness and the accuracy of the model. If the accuracy of the upstream model drops too significantly, it can detrimentally impact the performance of the downstream model. Taking these observation into account, we outline our design objectives below, aiming to illustrate our intent of devising a cost-effective strategy to build a pre-trained model that strikes a balance between two goals, i.e., to increase  the parameter robustness while maintaining the high accuracy of the pre-trained model.

\begin{figure}[!tbp]
\centerline{\includegraphics[width=.75\linewidth]{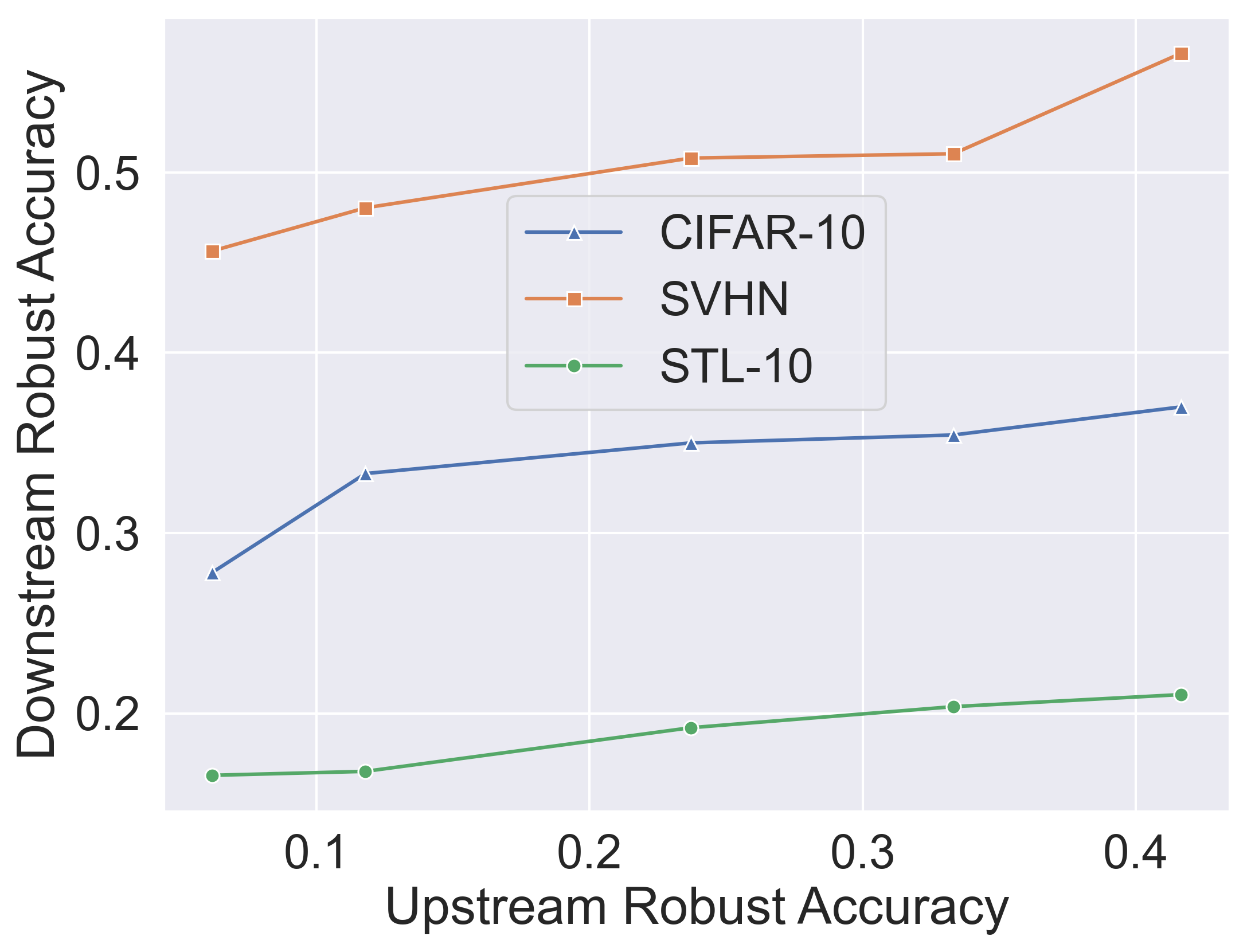}}
\caption{The relationship between the parameter robustness of the pretrained models and that of the models fine-tuned on the downstream tasks.}
\label{fig:robust_transfer}
\end{figure}

We further explore the transferability of parameter robustness in the context of transfer learning. In specific, we used CIFAR-100 as the upstream training dataset, and employed the CIFAR-10, SVHN, and STL-10 datasets, respectively for downstream training. We first trained the model on the upstream data with five different intensities of Adversarial Model Perturbation (AMP) characterized by the values \(\gamma = 0, 0.1, 0.2, 0.5, \) and \(1.0\). This procedure yielded five sets of pre-trained models, each with distinct robust accuracies.
Subsequently, we performed transfer learning from each of these five distinct pre-trained models on the downstream data. Upon the completion of training, the robust accuracy of models on the downstream dataset was computed. ~\autoref{fig:robust_transfer} shows the high between the robust accuracies of the pre-trained models (x-axis) and the robust accuracy of downstream models (y-axis). In general, it can be observed that the pre-trained models with higher robust accuracy tend to maintain their enhanced robust accuracy during the transfer learning for the downstream tasks. This observation underscores the inherent transferability of parameter robustness in a transfer learning context.

\para{Design Goals}. Based on the results from the toy examples as illustrated above, our objective is to build a pre-trained model that achieves two primary goals: robustness and effectiveness, which we elaborate below.

\vspace{2pt}\noindent$\bullet$\textit{~Robustness}.
As observed earlier, we aim for our pre-trained model to exhibit strong parameter robustness (for instance, a smaller value of $\rho$). This ensures that when the downstream model employs the DPML algorithm, the influence on its performance is minimal.

\vspace{2pt}\noindent$\bullet$\textit{~Generality}.
While striving for pronounced parameter robustness in the pre-trained model, it is imperative that the model's inherent performance remains commendable. Should we solely emphasize parameter robustness, the model may exhibit an extremely low value of $\rho$—indicating high robustness—but low accuracy. For instance, $f(x;\theta +\Delta) = f(x;\theta) \neq y$. As illustrated in \autoref{fig:gamma_amp}, when $\gamma$ exceeds 1, the pre-trained model's performance may deteriorate considerably. As a result, the DPML algorithm employed for downstream training might not inherit sufficient effective features from the pre-trained model, leading to a diminished outcome. Hence, it is equally pivotal for the pre-trained model to retain high performance as compared to enhance its parameter robustness.

\section{\ourmethod{}}
\label{sec:method}

In this section, we present the design and implementation of the proposed \ourmethod{}, together with a theoretical analysis of its effectiveness. 

In essence, the design of \ourmethod{} is rooted in the two observations mentioned earlier. Our objective is for the pre-trained model to achieve higher parameter robustness while maintaining its performance. Given the observation that parameter robustness can be transferred when the model is passed downstream, the fine-tuning DPML algorithms can operate under the same privacy budget. This, in turn, minimizes the impact on performance.

\subsection{Design Challenge}
\label{subsec:challenge}

We observe that while AMP can enhance parameter robustness, leading to an improvement in the DP-SGD performance of downstream fine-tuned models, the extent of this robustness enhancement is quite limited. This, in turn, results in only a marginal performance boost. As we illustrated in \autoref{subsec:compare} using AMP as an example, the effect of existing model parameter adversarial training on enhancing the parameter robustness of pre-trained models is not significant. Consequently, the impact on the performance of downstream DPML algorithms is also marginal. This is primarily because the existing methods aimed at improving parameter robustness were primarily designed to enhance model generalization. Since they were not explicitly designed for robustness enhancement, the gains in parameter robustness tend to be limited. While these methods can indeed be adjusted to prioritize robustness by increasing the perturbation magnitude at the expense of generalization, such a trade-off is often costly. As depicted in \autoref{fig:gamma_amp}, a high perturbation magnitude not only fails to bring much-added robustness but also makes the model challenging to converge. Given these challenges, we introduce~\ourmethod{}, a novel approach to help pre-trained models achieve higher parameter robustness and minimize the adverse impact on performance when downstream models employ DPML algorithms.

\begin{figure}[!tbp]
\centerline{\includegraphics[width=\linewidth]{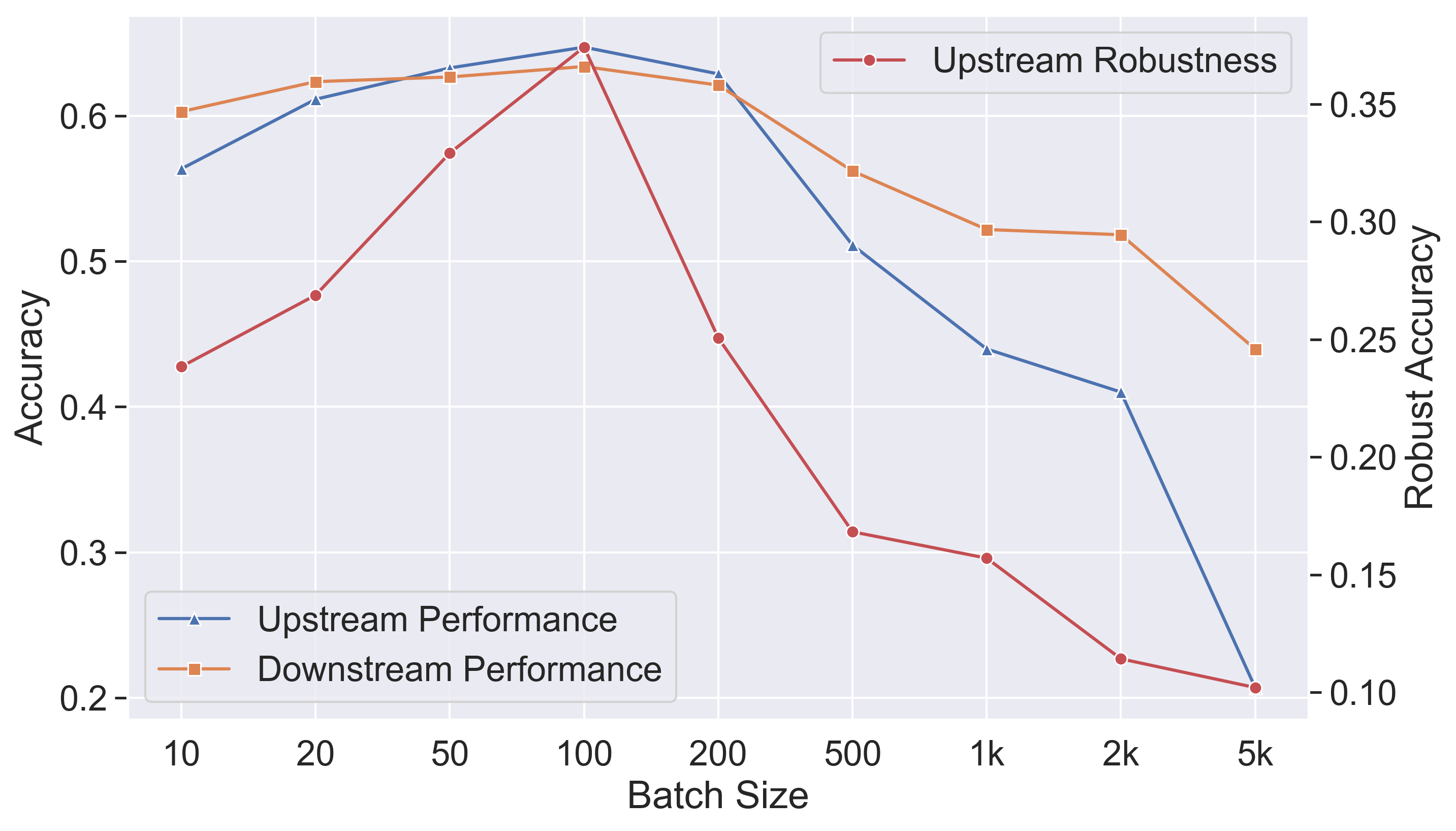}}
\caption{Impact of different batch sizes on AMP.}
\label{fig:batch_size}
\end{figure}

\subsection{Design Intuition}
\label{subsec:intuition}
As highlighted in the ``Challenge'' section above, traditional methods of enhancing parameter robustness often result in a significant trade-off between parameter robustness and model performance. We've observed that these conventional methods of improving parameter robustness primarily leverage schemes akin to SGD for adversarial training. During a single optimization iteration, the batch used to compute the perturbation and the batch used to calculate the gradient direction for optimization are the same. However, the rationale behind this approach has not been extensively discussed. In fact, taking the AMP as an example, as depicted in \autoref{fig:batch_size}, when a large batch size is chosen, the model's training becomes sub-optimal due to the utilization of this large batch. On the contrary, if the batch size is too small, computing the perturbation based on such limited data leads to inaccurate perturbations, subsequently compromising the robustness of the training. Note that a recent study suggests employing a linear scaling rule to adjust the learning rate based on the modified batch size~\cite{lr_bs}; specifically, the learning rate should be multiplied by $k$ when using a batch size of $kN$. Consequently, in this experiment, we also adhere to this setting to approximate the optimal learning rate for each batch size.

Based on these observations, we decided to decouple the batches used for computing perturbations and those for calculating parameter gradients during the optimization process. For computing perturbations, we chose a batch size larger than what is typically required for regular training. The reasoning is that perturbation computations don't necessarily benefit from being stochastic. Ideally, the entire dataset would be used to compute the perturbation. However, due to computational resource constraints, using the complete dataset for this purpose is challenging. Hence, we opt for a compromise by selecting a relatively larger batch size for the perturbation computations. In contrast, for batches used to compute model optimization gradients, we chose sizes that are conducive to model optimization, ensuring we avoid sizes that are excessively large or small that could impair model performance.

\begin{figure*}[!tbp]
\centerline{\includegraphics[width=0.8\linewidth]{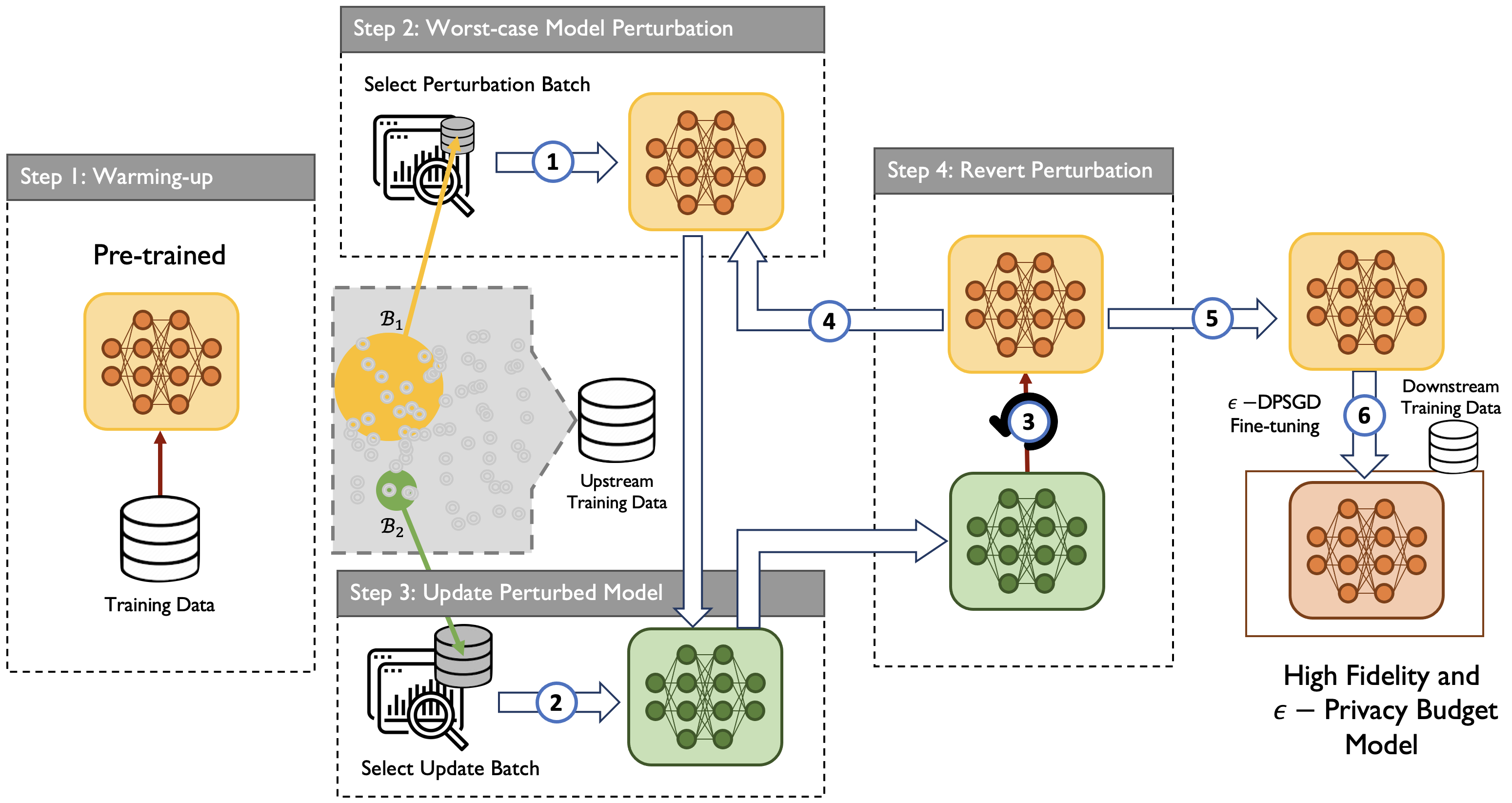}}
\caption{\small Overview of the \ourmethod{} approach. In the first step, known as standard training, we use the training data to conduct standard training on the model. This is analogous to the preheating process common in typical Adversarial Training (AT), where the model's accuracy is first brought up to a standard level, resulting in the model \(f_{\theta}\). 
Next, in step 2, We selected a batch data $\mathcal{B}_1$ (big batch size) from the training data, and utilize it to compute the worst-case model perturbation by ~\autoref{eq:perturbation} (\textcircled{1}), producing the model \(f_{\theta+\Delta}\).
In step 3, we use another batch of data $\mathcal{B}_2$ to do sgd on the model \(f_{\theta+\Delta}\) (\textcircled{2}), resulting in the model \(f_{\mathbf{w}+\Delta+\alpha}\).
In step 4, we reverse the perturbation made in step 2 within the model \(f_{\theta+\Delta+\alpha}\) (\textcircled{3}), obtaining the model \(f_{\theta+\alpha}\). 
Steps 2 to 4 are performed iteratively (\textcircled{4}), and upon reaching the desired number of iterations, we return the target model (\textcircled{5}). This culminating model is tailored to facilitate subsequent fine-tuning processes, enabling models built upon it to more efficaciously deploy the DPML optimization algorithm(\textcircled{6}). This ensures not only the protection of the privacy of the fine-tuned dataset but also a notable enhancement in DPML performance.}
\label{fig:overview}
\end{figure*}

\subsection{Methodology Overview}
\label{subsec:method}

\autoref{fig:overview} illustrates the overall workflow of \ourmethod{}. To efficiently achieve the parameter robustness of pre-trained models, ~\ourmethod{} employs a Min-Max optimization, iterating over four steps of model updating as presented below.

\para{Step 1: Warming-up.}
The pre-trained model initiates with a warming-up phase using standard training techniques to ensure the accuracy (ACC) meets the targeted benchmark. We refer to the model post-warming-up as \(f_{\mathbf{\theta}}\).

\para{Step 2: Worst-case Model Perturbation.}
We select an enlarged batch of training data, denoted as \(\mathcal{B}_1\). We then compute the direction of the gradient that maximizes the loss of this batch under the current model parameters. Subsequently, we perturb the model parameters along this direction with the intention of impairing the model's predictive capability on this large batch of data when subjected to such perturbation. We denote the perturbation as $\Delta$, and the model after perturbation as $f_{\mathbf{\theta}+\Delta}$.

\para{Step 3: Update on the Perturbed Model.}
To enhance the model's resilience to perturbations in its parameters (parameter robustness), we optimize the perturbed model with the goal of maintaining good performance even when the model is subjected to such perturbations. We select a standard-sized batch of training data, represented as \(\mathcal{B}_2\). We then perform regular training using stochastic gradient descent (SGD) on \(\mathcal{B}_2\). The updated model is denoted as \(f_{\mathbf{\theta}+\Delta + \alpha}\), where \(\alpha\) represents the weight update resulting from the SGD.

\para{Step 4: Revert Perturbation.}
In Step 3, we have already enhanced the model's resilience to parameter perturbations. As a result, there's no longer a need to retain the noise applied to the parameters in Step 2, as doing so would only degrade our model's accuracy. Therefore, in this step, we revert the update of the model weight made in Step 2: the final model is represented as \(f_{\mathbf{\theta}+\Delta + \alpha-\Delta} = f_{\mathbf{\theta} + \alpha}\).

Our workflow operates iteratively. Upon completion of Step 4, the process loops back to Step 1. In each iteration, distinct batches of training data are randomly selected for both \(\mathcal{B}_1\) and \(\mathcal{B}_2\). The algorithm takes the total number of iterations as an input, ensuring that training concludes after the specified number of iterations.

\para{Remark.}
It is important to highlight the distinction between our approach and conventional techniques aimed at enhancing a model's generality by improving parameter robustness, such as AMP~\cite{AMP}. Specifically, AMP optimizes using the same \(\mathcal{B}_2\) and \(\mathcal{B}_1\).

The key innovation in our method lies in the strategic batch selection during two different optimization processes: the worst-case perturbation computation and the SGD gradient computation. By ensuring that the batch size for \(\mathcal{B}_1\) is relatively large, we obtain a more precise and general perturbation. In contrast, a standard batch size is used for \(\mathcal{B}_2\) to facilitate optimal training with SGD. Such nuanced improvements enable our pretrained model, \ourmethod{}, to achieve heightened parameter robustness, resulting in significant enhancements when downstream DPSGD is applied.

\subsection{Design Details}
We now delve into a detailed presentation of the \ourmethod{} algorithm, focusing primarily on Steps 2 and 3. These steps are comparatively more complex: while Step 1 involves standard training and Step 4 simply involves a subtraction to remove noise from the parameters. Notably, it's Steps 2 and 3 that distinctly set \ourmethod{} apart from other SAM algorithms. 

First, like any typical SAM algorithm, we define a {\em norm ball} ${\rm\bf B}(\boldsymbol{\mu};\gamma)$, as a region around a given point $\boldsymbol{\mu}$ in the parameter space $\Theta$ (i.e., $\boldsymbol{\mu}\in\Theta$) with the radius of $\gamma$ ($\gamma \ge 0$):
\begin{equation}
{\rm\bf B}(\boldsymbol{\mu};\gamma):=\{\boldsymbol{\theta}\in\Theta:\Vert\boldsymbol{\theta}-\boldsymbol{\mu}\Vert\le\gamma\}
\end{equation}

Recall that a typical SAM loss is designed upon the norm ball  ${\rm\bf B}(\theta;\gamma)$:
\begin{equation}\label{eq:L_sam}
\mathcal{L}_\mathrm{SAM}(\boldsymbol{\theta}):=\max_{\Delta\in{\rm\bf B}(\boldsymbol{0};\gamma)}\frac{1}{|\mathcal{D}|}\sum_{(\boldsymbol{x},\boldsymbol{y})\in\mathcal{D}}\ell(\boldsymbol{x},\boldsymbol{y};\boldsymbol{\theta}+\Delta)
\end{equation}
where $\mathcal{D}$ represent the training dataset. The training loss, denoted by \(\ell\), varies based on the task at hand: for classification tasks, it is typically the cross-entropy, while for regression tasks, it is often the least squares.
The perturbation \(\Delta\) is the worst-case model perturbation

\para{Worst-case Model Perturbation.}
\noindent In this section, we detail the computation of \(\Delta\) used to obtain \(f_{\theta + \Delta}\) in Step 2. The worst-case model perturbation, \(\Delta\), is strategically designed to induce the model parameters \(\theta\) to achieve the fastest reduction in loss on the dataset:
\begin{equation}
\label{eq:perturbation}
    \Delta_{\mathcal{B}_1} = \argmax_{\Delta\in{\rm\bf B}(\boldsymbol{\theta};\gamma)}\frac{1}{|\mathcal{B}_1|}\sum_{(\boldsymbol{x},\boldsymbol{y})\in\mathcal{B}_1}\ell(\boldsymbol{x},\boldsymbol{y};\boldsymbol{\theta}+\Delta)
\end{equation}

Recall that in \ourmethod{}, $\mathcal{B}_1$ denote the batch of data use as to compute the worst-case model perturbation. \(\gamma\) denotes the radius of the norm ball and acts as a hyperparameter that governs the degree of parameter robustness training. The robustness of the model's parameters is directly influenced by this radius: a larger \(\gamma\) implies that the model can tolerate more noise in the parameter space, as determined by \ourmethod{}. Consequently, a greater \(\gamma\) results in a model with enhanced parameter robustness.

\para{Update Perturbed Model.}
 In this section, we detail the computation of \(\alpha\) used to obtain \(f_{\theta + \Delta + \alpha}\) in Step 3. Since in practice, the optimization is typically carried out using mini-batches. In this case, the $\mathcal{L}_\mathrm{DPAdapter}$ can be approximated using a mini-batch $\mathcal{B}_2$:
\begin{align}
\label{eq:update}
\mathcal{L}_\mathrm{~\ourmethod{}}(\boldsymbol{\theta})
&\approx\max_{\Delta_{\mathcal{B}_1}\in{\rm\bf B}(\boldsymbol{0};\gamma)}\frac{1}{|\mathcal{B}_2|}\sum_{(\boldsymbol{x},\boldsymbol{y})\in\mathcal{B}_2}\ell(\boldsymbol{x},\boldsymbol{y};\boldsymbol{\theta}+\Delta_{\mathcal{B}_1})\nonumber\\
&:=\mathcal{J}_{\mathrm{~\ourmethod{}},\mathcal{B}_1,\mathcal{B}_2}(\boldsymbol{\theta})
\end{align}

Hence, at this point, the model's update is given by 
\begin{equation}
    \alpha = \eta_2 \nabla \mathcal{J}_{\mathrm{~\ourmethod{}}, \mathcal{B}_1, \mathcal{B}_2}(\boldsymbol{\theta})
\end{equation}

where \(\eta_2\) represents the optimization step size. It is crucial to highlight the primary distinction between \ourmethod{} and other SAM algorithms, which lies in \autoref{eq:perturbation} and \autoref{eq:update}. In other SAM algorithms, such as those cited in ~\cite{AMP,AWP,sam}, the batch used to compute the worst-case perturbation, denoted as \(\mathcal{B}_1\), is the same as the batch utilized for computing the update. We argue that by separately selecting \(\mathcal{B}_1\) and \(\mathcal{B}_2\), the trained model can achieve a balance between parameter robustness and overall model performance in \autoref{subsec:compare}.

The specifics of the \ourmethod{} algorithm can be found in \autoref{alg:DPAdapter}. Importantly, in alignment with other SAM algorithms such as~\cite{AMP,AWP,sam}, the finalized model utilizes the parameters denoted as $\boldsymbol{\theta}^\ast_\mathrm{DPAdapter}$. These parameters are employed for model inference (or prediction) without the need for additional perturbations.

As mentioned earlier, the training process of \ourmethod{} is divided into two primary stages: The perturbation computation and the update computation. 
 
\vspace{2pt}\noindent$\bullet$\textbf{~Perturbation Computation}.
Line~\ref{algline:cal_perturb_start} to Line~\ref{algline:cal_perturb_end} in \autoref{alg:DPAdapter}. This phase is responsible for calculating the worst-case perturbation, $\Delta_{\mathcal{B}_1}$, based on the prevailing conditions.

\vspace{2pt}\noindent$\bullet$\textbf{~Update Computation}.
Line~\ref{algline:cal_update_start} to Line~\ref{algline:cal_update_end} in \autoref{alg:DPAdapter}. In this phase, the $\Delta_{\mathcal{B}_1}$ derived from the previous perturbation computation step is used to determine the parameter optimization gradient, $\nabla\mathcal{J}_{\mathrm{~\ourmethod{}},\mathcal{B}_1,\mathcal{B}_2}$. Following this, the model parameters are updated.

Training can be executed for a specified number of epochs $K$, or it can continue until the parameter loss stabilizes, which indicates the completion of the training process.

\begin{algorithm}[!tbp]
\small
\caption{\ourmethod{} Training}
\label{alg:DPAdapter}
\begin{algorithmic}[1]
\REQUIRE Training set $\mathcal{D}=\{(\boldsymbol{x},\boldsymbol{y})\}$, perturbation batch size $m_1$, update batch size $m_2$, loss function $\ell$, perturbation computation learning rate $\eta_1$, update learning rate $\eta_2$,
norm ball radius $\gamma$, number of update iteration $K$.
\WHILE{$k<K$}
\STATE Draw $\mathcal{B}_1=\{(\boldsymbol{x}_i,\boldsymbol{y}_i)\}_{i=1}^{m_1}$ from training set $\mathcal{D}$
\STATE Draw $\mathcal{B}_2=\{(\boldsymbol{x}_i,\boldsymbol{y}_i)\}_{i=1}^{m_2}$ from training set $\mathcal{D}$
\STATE Set perturbation: $\Delta_{\mathcal{B}_1}\gets\boldsymbol{0}$
\STATE \label{algline:cal_perturb_start} Calculate the perturbation: \\  $\qquad\Delta_{\mathcal{B}_1}\gets \eta_1\argmax_{\Delta\in{\rm\bf B}(\boldsymbol{\theta_k};\gamma)}\frac{1}{|m_1|}\sum_{(\boldsymbol{x},\boldsymbol{y})\in\mathcal{B}_1}\ell(\boldsymbol{x},\boldsymbol{y};\boldsymbol{\theta_k}+\Delta)$
\IF{$\Vert\Delta_{\mathcal{B}_1}\Vert_2>\gamma$}
\STATE Normalize perturbation: $\Delta_{\mathcal{B}_1}\gets\gamma\Delta_{\mathcal{B}_1}/\Vert\Delta_{\mathcal{B}_1}\Vert_2$
\ENDIF
\STATE \label{algline:cal_perturb_end} Add the perturbation to parameters: \\ $\qquad\boldsymbol{\theta}_{k}\gets\boldsymbol{\theta}_k + \Delta_{\mathcal{B}_1}$
\STATE \label{algline:cal_update_start} Compute update gradient: \\ $\qquad\nabla\mathcal{J}_{\mathrm{~\ourmethod{}},{\mathcal{B}_1},{\mathcal{B}_2}}\gets\sum_{i=1}^{|\mathcal{B}_2|}\nabla_{\boldsymbol{\theta}}\ell(\boldsymbol{x}_i,\boldsymbol{y}_i;\boldsymbol{\theta}_k)/{m_2}$
\STATE Update on the perturbed model : \\ $\qquad\boldsymbol{\theta}_{k+1}\gets\boldsymbol{\theta}_k-\eta_2\nabla\mathcal{J}_{\mathrm{~\ourmethod{}},\mathcal{B}_1,\mathcal{B}_2}$

\STATE \label{algline:cal_update_end} Revert perturbation: $\boldsymbol{\theta}_{k+1}\gets\boldsymbol{\theta}_{k+1}-\Delta_{\mathcal{B}_1}$

\STATE Increment iteration: $k\gets k+1$

\ENDWHILE
\end{algorithmic}
\end{algorithm}

\subsection{Theoretical Analyses}
\label{subsec:theory}

\newtheorem{assumption}{Assumption}
\newcommand{\xu}{X_{\text{update batch}}}
\newcommand{\xp}{X_{\text{perturbation batch}}}
\newcommand{\cB}{\mathcal{B}}
\newcommand{\cS}{\mathcal{S}}
\newcommand{\cD}{\mathcal{D}}
\newcommand{\wb}{\mathbf{w}}
\newcommand{\btheta}{\boldsymbol{\theta}}

In this section, we aim to provide a theoretical explanation of why ~\ourmethod{} achieves improvements in the DPML algorithm. Specifically, our theoretical examination revolves around two key points:

\begin{enumerate}
    \item Understanding the rationale behind the separate selection of $\cB_1$ and $\cB_2$, particularly emphasizing why the batch size of $\cB_2$ should be large.
    \item Elucidating how enhancing the robustness of pretrained parameters can ensure superior performance in downstream training, given the same privacy budget.
\end{enumerate}

The first point is theoretically proven in Theorem 1, while the second is validated in Theorem 2.

To give a basic idea, through Theorem 1, we aim to theoretically demonstrate that, upon selecting an appropriate \({\mathcal{B}_1}\) conducive to optimization and holding it constant, we can bound the difference between the actual loss and the maximum lower bound (Infimum) of the loss. Essentially, as the magnitude of \({\mathcal{B}_2}\) increases, the upper bound on the expected convergence rate of the model obtained through ~\ourmethod{} also increases.

In theorem 2, we prove that given a model training by DP-SGD, the expected difference between the model's loss and the maximum lower bound of the achievable loss (termed as DP-SGD's performance) is positively proportional to $\rho$. Specifically, a lower value of $\rho$ (indicating stronger parameter robustness) in the model initial stage, results in a lower loss when optimized using DP-SGD.

Next, we delineate the setup for our theoretical framework. We first analyze the convergence behavior of our proposed algorithm. We show that by using a different update batch (where $|\cB_1| \neq |\cB_2|$). We summarize our algorithm as follows. Let $L_i(\btheta):=\ell(f_{\btheta}(\boldsymbol{x}_i), \boldsymbol{y}_i)$. For any batch of the data $\cS \subseteq \cD$, let $L_{\cS}$ be the average of $L_i, i\in \cS$. Then our algorithm is as follows. Starting from the initial parameter $\btheta_0$, for each iteration $t$, we have
\begin{align}
\btheta_{t+1} = \btheta_t - \eta_2\cdot \nabla L_{\cB_2}(\btheta_t + \Delta_t),\ 
\Delta_t:=\eta_1\cdot \nabla L_{\cB_1}(\btheta_t).\label{alg:111}
\end{align}
\autoref{alg:111} is essentially the same as \autoref{alg:DPAdapter} in \autoref{subsec:method} without the perturbation normalization step (line 6-8 in \autoref{alg:DPAdapter}). 
Meanwhile, we need the following assumptions. 
\begin{assumption}
$f$ is $\beta$-smooth w.r.t. $\btheta$, i.e., $\|\nabla f_{\btheta}(\boldsymbol{x}) - \nabla f_{\btheta'}(\boldsymbol{x})\|_2 \leq \beta\|\btheta - \btheta'\|_2$. 
\end{assumption}
\begin{assumption}
    $\ell$ is $\beta_1$-Lipschitz continuous w.r.t. $\boldsymbol{x}$ i.e., $|\ell(\boldsymbol{x},\boldsymbol{y}) - \ell(\boldsymbol{x}',\boldsymbol{y})| \leq \beta_1|\boldsymbol{x} - \boldsymbol{x}'|$.
\end{assumption}
\begin{assumption}
    $\ell$ is $\beta_2$-smooth w.r.t. $\boldsymbol{x}$, i.e., $|\nabla_{\boldsymbol{x}}\ell(\boldsymbol{x},\boldsymbol{y}) - \nabla_{\boldsymbol{x}}\ell(\boldsymbol{x}',\boldsymbol{y})| \leq \beta_2|\boldsymbol{x} - \boldsymbol{x}'|$.
\end{assumption}
\begin{assumption}
    $\nabla L_i$ is $\hat\sigma^2$-variance bounded gradient, i.e., $\mathbb{E}_i\|\nabla L_i(\btheta) - \nabla L_\cD(\btheta)\|_2^2 \leq \hat\sigma^2$. Here $\mathbb{E}_i$ denotes the expectation over $i\in1,\dots, n$. 
\end{assumption}
\begin{assumption}
    $L_\cD$ satisfies the $\mu$-PL-condition, i.e., $\|\nabla L_\cD(\btheta)\|_2^2 \geq 1/\mu\cdot(L_\cD(\btheta) - \inf_{\hat \btheta} L_\cD(\hat \btheta))$.
\end{assumption}
We also formally define the parameter robustness in \autoref{eq:para robust} as follows:

\begin{equation}
    \rho(f) = \max_{\boldsymbol{x}, \btheta, \Delta}|f_{\theta+\Delta}(\boldsymbol{x}) - f_{\theta}(\boldsymbol{x})|/\|\Delta\|_2.\notag
\end{equation}
For simplicity, we use $\rho:=\rho(f)$ with a slight abuse of notation. Then we have the following theorem.
\begin{theorem}[Informal]\label{thm:1}
With proper selection of parameters, our algorithm enjoys the following gradient norm bound:
    \begin{align}
        &\frac{1}{T}\sum_{t=1}^T \mathbb{E}(L_\cD(\btheta_t) - \inf_{\hat \btheta} L_\cD(\hat \btheta) )\notag \\
        &\leq \mu\cdot\bigg(\frac{L_\cD(\btheta_0)}{T} + \frac{\hat\sigma^2}{16\hat\beta}(1/|\cB_2| + 1/|\cB_1|)\bigg),\notag
    \end{align}
    where $\hat\beta = (\rho^2\beta_2 + \beta\beta_1)$. 
\end{theorem}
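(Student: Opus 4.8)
The plan is to read \autoref{alg:111} as a biased, doubly-stochastic gradient method and to run a standard Polyak--{\L}ojasiewicz (PL) convergence argument, with the one nonstandard ingredient being that the \emph{effective smoothness constant} of the composite objective turns out to be exactly $\hat\beta = \rho^2\beta_2 + \beta\beta_1$. Throughout I read Assumptions~2 and~3 as statements about the prediction argument of $\ell$ (i.e.\ about $f_{\btheta}(\boldsymbol{x})$), so that they can be composed with Assumption~1 and the robustness constant $\rho$.

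First I would establish the key structural lemma: the population loss $L_{\cD}(\btheta) = \mathbb{E}_i\, L_i(\btheta)$ is $\hat\beta$-smooth. Writing $\nabla_{\btheta} L_i = \nabla_1\ell(f_{\btheta}(\boldsymbol{x}_i),\boldsymbol{y}_i)\cdot \nabla_{\btheta} f_{\btheta}(\boldsymbol{x}_i)$ by the chain rule, I would bound $\|\nabla_{\btheta} L_i(\btheta) - \nabla_{\btheta} L_i(\btheta')\|$ by splitting the difference of products as $\|AB - A'B'\| \le \|A-A'\|\,\|B\| + \|A'\|\,\|B-B'\|$. The definition of $\rho$ supplies both $\|\nabla_{\btheta} f_{\btheta}(\boldsymbol{x})\| \le \rho$ and $\|f_{\btheta}(\boldsymbol{x}) - f_{\btheta'}(\boldsymbol{x})\| \le \rho\|\btheta-\btheta'\|$; Assumption~3 then controls the change in $\nabla_1\ell$ and contributes the term $\rho^2\beta_2$, while Assumption~2 bounds $\|\nabla_1\ell\| \le \beta_1$ and Assumption~1 controls the change in $\nabla_{\btheta} f$, together contributing $\beta\beta_1$. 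Summing yields $\hat\beta$. This is the step that injects $\rho$ into the final bound and is the conceptual heart of the theorem: stronger parameter robustness (smaller $\rho$) directly shrinks the governing smoothness constant.

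Next I would apply the descent lemma for $\hat\beta$-smooth functions to one step, with $g_t := \nabla L_{\cB_2}(\btheta_t + \Delta_t)$ and $\btheta_{t+1}-\btheta_t = -\eta_2 g_t$, giving $L_{\cD}(\btheta_{t+1}) \le L_{\cD}(\btheta_t) - \eta_2\langle \nabla L_{\cD}(\btheta_t), g_t\rangle + \tfrac{\hat\beta\eta_2^2}{2}\|g_t\|^2$. Taking conditional expectation over independently drawn $\cB_1,\cB_2$, I would decompose $g_t$ into signal and noise and handle two deviations. The first is the \emph{perturbation bias}: $g_t$ is a gradient at the shifted point $\btheta_t+\Delta_t$, and $\hat\beta$-smoothness gives $\|\nabla L_{\cD}(\btheta_t+\Delta_t) - \nabla L_{\cD}(\btheta_t)\| \le \hat\beta\eta_1\|\nabla L_{\cB_1}(\btheta_t)\|$, so a small $\eta_1$ keeps the resulting cross term a lower-order multiple of $\|\nabla L_{\cD}(\btheta_t)\|^2$. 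The second is the \emph{minibatch variance}: Assumption~4 yields $\mathbb{E}\|\nabla L_{\cB_2} - \nabla L_{\cD}\|^2 \le \hat\sigma^2/|\cB_2|$ for the update batch and, propagated through $\Delta_t$, an $\hat\sigma^2/|\cB_1|$ contribution from the perturbation batch. Collecting terms gives a one-step inequality of the form $\mathbb{E}[L_{\cD}(\btheta_{t+1})] \le L_{\cD}(\btheta_t) - c\eta_2\|\nabla L_{\cD}(\btheta_t)\|^2 + \eta_2^2\hat\beta\hat\sigma^2(1/|\cB_1| + 1/|\cB_2|)$.

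Finally I would telescope from $t=1$ to $T$, divide by $T$ to obtain a bound on $\tfrac{1}{T}\sum_t \mathbb{E}\|\nabla L_{\cD}(\btheta_t)\|^2$ controlled by an $L_{\cD}(\btheta_0)/T$ initialization term plus the variance term, and then invoke the $\mu$-PL condition (Assumption~5) as $L_{\cD}(\btheta) - \inf_{\hat\btheta} L_{\cD}(\hat\btheta) \le \mu\|\nabla L_{\cD}(\btheta)\|^2$ to convert the gradient bound into the claimed suboptimality bound. The phrase ``proper selection of parameters'' then amounts to fixing $\eta_2$ at the scale dictated by $\hat\beta$ and tuning its constant together with a sufficiently small $\eta_1$ to realize the displayed $\tfrac{1}{16\hat\beta}$ factor and the leading $1/T$ rate. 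I expect the main obstacle to be the perturbation-bias analysis: rigorously showing that the gradient-ascent step $\Delta_t = \eta_1\nabla L_{\cB_1}(\btheta_t)$, which deliberately \emph{increases} the loss and is correlated with the current gradient, neither destroys the descent property nor contaminates the noise floor — i.e.\ that its $\cB_1$-variance enters precisely as the $1/|\cB_1|$ term rather than overwhelming the $-c\eta_2\|\nabla L_{\cD}(\btheta_t)\|^2$ descent, which is what ultimately justifies decoupling $\cB_1$ from $\cB_2$ and taking $|\cB_1|$ large.
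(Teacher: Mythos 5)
Your proposal is correct and follows essentially the same route as the paper's proof: the same key lemma deriving the smoothness constant $\hat\beta = \rho^2\beta_2 + \beta\beta_1$ from the chain rule, the product-splitting bound, the robustness constant $\rho$ (used both as a Lipschitz constant for $f$ in $\boldsymbol{\theta}$ and as a bound on $\|\nabla_{\boldsymbol{\theta}} f\|$), and Assumptions 1--3; the same one-step descent analysis separating the perturbation bias from the $1/|\mathcal{B}_1|$ and $1/|\mathcal{B}_2|$ variance contributions; and the same telescoping followed by the $\mu$-PL conversion. The only difference is presentational: the paper imports the one-step bias and variance bounds (precisely the step you flag as the main obstacle) as Lemmas 13 and 14 of a cited work of Andriushchenko and Flammarion, with the concrete choice $\eta_1 = \eta_2 = 1/(4\hat\beta)$, rather than re-deriving them from scratch.
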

The detailed theorem and proof are deferred to Appendix \ref{app:thm1}. 
\autoref{thm:1} suggests that selecting a large perturbation batch size $|\cB_1|$ makes our algorithm have a better convergence rate, indicating better performance while using a large batch of perturbations $|\cB_1|$.

We have another analysis of the noise tolerance and the DP-SGD performance. 
Then we have our theorem. 
\begin{theorem}[Informal]\label{thm:2}
    Given $(\epsilon, \delta)$, with proper selection of parameters, let $\btheta_{\text{out}}$ be the final output of DP-SGD, then DP-SGD is $(\epsilon, \delta)$-DP and enjoys the following utility bound:
    \begin{align}
        \mathbb{E}(L_\cD(\btheta_{\text{out}}) - \inf_{\btheta} L_\cD(\btheta) )\leq C\cdot \frac{\rho\sqrt{\rho^2\beta_2 + \beta\beta_1}}{|\cD|\epsilon},\notag
    \end{align}
    where $C = c\cdot\mu\beta_1\sqrt{d\log(|\cD|/\delta)\log(1/\delta)L_\cD(\btheta_0)}$, $c$ is some positive constant, $d$ is the dimension of parameter $\btheta$. 
\end{theorem}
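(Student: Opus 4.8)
The plan is to reduce the statement to a standard differentially private optimization bound, with the single crucial observation that the parameter robustness $\rho$ controls both the Lipschitz constant and the smoothness constant of the empirical loss viewed as a function of $\btheta$. Concretely, I would (i) translate $\rho$ together with Assumptions 1--3 into a gradient-norm bound $G$ and a smoothness constant $\hat\beta$ for $L_i(\btheta)=\ell(f_{\btheta}(\boldsymbol{x}_i),\boldsymbol{y}_i)$; (ii) invoke the standard DP-SGD privacy guarantee (per-sample clipping at level $G$ plus Gaussian noise, composed over $T$ subsampled iterations) to fix the noise scale in terms of $(\epsilon,\delta)$; and (iii) run a noisy-gradient descent-lemma argument, closed with the PL-condition (Assumption 5), and finally optimize the step size $\eta$ and iteration count $T$ to obtain the advertised rate.

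The heart of the reduction is step (i), which is where $\rho$ enters. Since $\rho$ is exactly the Lipschitz constant of $\btheta\mapsto f_{\btheta}(\boldsymbol{x})$ uniformly in $\boldsymbol{x}$, we have $\|\nabla_{\btheta} f_{\btheta}(\boldsymbol{x})\|_2\le\rho$. By the chain rule $\nabla_{\btheta}L_i=\ell'(f_{\btheta}(\boldsymbol{x}_i),\boldsymbol{y}_i)\,\nabla_{\btheta}f_{\btheta}(\boldsymbol{x}_i)$, Assumption 2 ($|\nabla_{x}\ell|\le\beta_1$) yields the per-example gradient bound $\|\nabla_{\btheta}L_i\|_2\le\beta_1\rho=:G$, which I would adopt as the clipping threshold so that clipping introduces no bias. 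Differentiating once more and splitting the product into two terms, Assumption 3 ($\beta_2$-smoothness of $\ell$ in its input, contributing $\rho\cdot\beta_2\cdot\rho$ after using $|f_{\btheta}-f_{\btheta'}|\le\rho\|\btheta-\btheta'\|_2$) and Assumption 1 ($\beta$-smoothness of $f$, contributing $\beta_1\beta$) combine to give exactly $\hat\beta=\rho^2\beta_2+\beta\beta_1$ as the smoothness modulus of $L_i$ and hence of $L_{\cD}$. This recovers the same $\hat\beta$ appearing in \autoref{thm:1}.

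For steps (ii)--(iii), I would apply the $\hat\beta$-smooth descent lemma to the update $\btheta_{t+1}=\btheta_t-\eta(g_t+z_t)$, where $g_t$ is the unbiased clipped minibatch gradient and $z_t$ is the injected Gaussian noise; the subsampled moments accountant gives per-coordinate noise variance of order $G^2\log(1/\delta)\,T/(|\cD|^2\epsilon^2)$, so the total injected variance over $d$ coordinates is $\sigma_{\mathrm{DP}}^2 d\asymp G^2 d\,T\log(1/\delta)/(|\cD|^2\epsilon^2)$. Taking an averaged-iterate bound $\mathbb{E}[L_{\cD}-\inf L_{\cD}]\lesssim \|\btheta_0-\btheta^\ast\|_2^2/(\eta T)+\eta\,\sigma_{\mathrm{DP}}^2 d$ and optimizing $\eta$ produces a term of order $\|\btheta_0-\btheta^\ast\|_2\,G\sqrt{d\log(1/\delta)}/(|\cD|\epsilon)$, which is the source of the $1/(|\cD|\epsilon)$ scaling. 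The remaining job is to eliminate $\|\btheta_0-\btheta^\ast\|_2$ and surface the stated constants: the PL-condition implies quadratic growth, giving $\|\btheta_0-\btheta^\ast\|_2\lesssim\sqrt{\mu\,L_{\cD}(\btheta_0)}$, while the $\sqrt{\hat\beta}$ factor and the $\log(|\cD|/\delta)$ enter through the step-size ceiling $\eta\le 1/\hat\beta$ and the choice $T\asymp|\cD|$ in the accountant. Substituting $G=\beta_1\rho$ and $\hat\beta=\rho^2\beta_2+\beta\beta_1$ then yields $C\cdot\rho\sqrt{\rho^2\beta_2+\beta\beta_1}/(|\cD|\epsilon)$ with $C$ as claimed.

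I expect the main obstacle to be threading the DP noise cleanly through the convergence argument so that all three appearances of $\rho$ reconcile: $\rho$ sits inside the clipping level $G$, inside the smoothness $\hat\beta$, and implicitly inside the quadratic-growth bound on $\|\btheta_0-\btheta^\ast\|_2$. Getting the final exponents right (a single power of $\rho$ outside the square root and $\sqrt{\hat\beta}$ inside) requires carefully deciding which term dominates when balancing optimization error against privacy noise, and verifying that the PL/quadratic-growth route, rather than the faster strongly-convex-type route, is the one producing the claimed $1/(|\cD|\epsilon)$ rate as opposed to $1/(|\cD|^2\epsilon^2)$. A secondary subtlety is justifying zero clipping bias, i.e., that $G=\beta_1\rho$ genuinely upper-bounds every per-example gradient norm, which is precisely what Assumptions 1--3 together with the robustness definition guarantee.
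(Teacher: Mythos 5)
Your step (i) is exactly the paper's Lemma~\ref{lemma:1}: the same chain-rule decomposition shows each $L_i$ is $\beta_1\rho$-Lipschitz and $\hat\beta$-smooth with $\hat\beta=\rho^2\beta_2+\beta\beta_1$, and this is indeed the only place where $\rho$ does any work. But the paper's steps (ii)--(iii) are much lighter than yours and structurally different: it runs the \emph{random-round} DP-SGD of \cite{zhang2017efficient} (Algorithm~\ref{alg:modisgd}: batch size one, no clipping --- the Lipschitz bound $\beta_1\rho$ serves directly as the sensitivity, which matches your ``zero clipping bias'' remark --- and, crucially, a round count $R$ drawn uniformly from $\{1,\dots,|\cD|^2\}$), cites that paper's Theorem~4 for privacy and Theorem~5 for the nonconvex guarantee $\mathbb{E}\|\nabla L_\cD(\btheta_{\text{out}})\|_2^2 \le c\,\beta_1\rho\sqrt{\hat\beta}\,\sqrt{d\log(|\cD|/\delta)\log(1/\delta)L_\cD(\btheta_0)}/(|\cD|\epsilon)$, and then applies the PL condition \emph{once, at the output point}, via $L_\cD(\btheta_{\text{out}})-\inf_{\btheta}L_\cD(\btheta)\le\mu\|\nabla L_\cD(\btheta_{\text{out}})\|_2^2$. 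That single conversion is the source of the lone factor $\mu$ and of the pattern ``one $\rho$ outside, $\sqrt{\hat\beta}$ and $\sqrt{L_\cD(\btheta_0)}$ inside'' in the stated constant.

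Your replacement for the cited theorem has a genuine gap. The averaged-iterate bound $\mathbb{E}[L_\cD-\inf L_\cD]\lesssim\|\btheta_0-\btheta^\ast\|_2^2/(\eta T)+\eta\,\sigma_{\mathrm{DP}}^2 d$ is a \emph{convexity} estimate (it telescopes $\|\btheta_t-\btheta^\ast\|_2^2$ using $L(\btheta_t)-L(\btheta^\ast)\le\langle\nabla L(\btheta_t),\btheta_t-\btheta^\ast\rangle$), and nothing in Assumptions~1--5 supplies convexity; PL does not rescue this step. In the nonconvex setting the descent lemma only controls the \emph{average} squared gradient norm $\frac{1}{T}\sum_t\mathbb{E}\|\nabla L_\cD(\btheta_t)\|_2^2$, and converting that average into a statement about a single returned point is precisely what the randomized round count in Algorithm~\ref{alg:modisgd} is for --- a device your sketch omits entirely. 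Relatedly, you deploy PL in the wrong place: routing it through quadratic growth to bound $\|\btheta_0-\btheta^\ast\|_2\lesssim\sqrt{\mu L_\cD(\btheta_0)}$ would yield a constant scaling as $\sqrt{\mu}$ rather than the theorem's $\mu$, and leaves the $\sqrt{\hat\beta}$ factor with no clean origin (in the convex averaged analysis the optimized step size removes the smoothness constant from the leading term; your appeal to the ceiling $\eta\le 1/\hat\beta$ is not a derivation). So while your final expression matches the theorem, the route you describe would not actually produce it; the repair is to prove (or cite) the nonconvex gradient-norm rate for the random-round algorithm and apply Assumption~5 afterward, as the paper does.
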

The detailed algorithm, theorem, and proof are deferred to Appendix~\ref{app:thm2}. 
\autoref{thm:2} suggests that the parameter robustness $\rho$ indeed affects the final utility. 

\section{Evaluation}
\label{sec:eval}
\subsection{Experimental Setup}
\label{subsec:setup}
\para{Datasets}. We use the following five datasets in our evaluation.

\vspace{4pt}\noindent$\bullet$\textit{~CIFAR-10}~\cite{krizhevsky2009learning}: This dataset contains $50,000$ training images and $10,000$ testing images. Each image has a size of $32\times32\times3$ and belongs to one of $10$ classes.

\vspace{4pt}\noindent$\bullet$\textit{~CIFAR-100}~\cite{krizhevsky2009learning}: This dataset contains $50,000$ training images and $10,000$ testing images. Each image has a size of $32\times32\times3$ and belongs to one of $100$ classes.

\vspace{4pt}\noindent$\bullet$\textit{~SVHN}~\cite{svhn}: In this dataset, each image represents a digit from the house numbers in Google Street View. The size of each image is $32\times32\times3$. Each image belongs to one of the $10$ digits. This dataset has $73,257$ training images and $26,032$ testing images.

\vspace{4pt}\noindent$\bullet$\textit{~STL-10}~\cite{STL-10}: This dataset contains $5,000$ training images and $8,000$ testing images. Each image has a size of $96\times96\times3$ and belongs to one of $10$ classes.

In this paper, CIFAR-100 serves as a public dataset used for pre-training, while CIFAR-10, SVHN, and STL-10 serves as the private data/tasks.

\para{Pre-training Procedure}.
In our experiments, we utilize CIFAR-100 as our pre-training dataset and employ \ourmethod{} to train a ResNet20 model~\cite{ResNet} as the pre-trained model. We compute the mean and standard deviation on the training set to normalize the input images. We adopt cross-entropy as the loss function $\ell$ and utilize the SGD optimizer with momentum, incorporating a step-wise learning rate decay. Specifically, the model is trained for $K=1,000$ epochs, with the outer learning rate $\eta_2$ initialized at $0.1$ and divided by $10$ after $500$ and $750$ epochs. The momentum is set to $0.9$, and the weight decay is set to $\num{1e-4}$. The inner batch size $m_1$ is set to $5,000$, with the inner learning rate $\eta_1$ set to $1.0$; the outer batch size $m_2$ is set to $50$. For perturbation magnitude, we adopt $\gamma=2.0$ by default.

\para{Fine-tuning Procedure}. 
Given a pre-trained model, we utilize it to train private downstream classifiers for the remaining three datasets: CIFAR-10, SVHN, and STL-10. We employ the parameters of the pre-trained model as the initial parameters of the downstream classifier, which is then trained on the downstream dataset.

We use R\'enyi DP to accumulate the overall privacy budget and precompute the required noise scale ($\sigma$ in DP-SGD) numerically~\cite{DPSGD, Gaussian_Mechanism}. We maintain $\delta=10^{-5}$ and utilize different privacy budgets: $\epsilon = \{1, 4\}$. The clipping threshold for all algorithms is fixed at 4, except when an algorithm employs special clipping strategies. The cross-entropy loss function and the DP-SGD optimizer with momentum are adopted when training a downstream classifier. The model is fine-tuned for $100$ epochs, with the learning rate initialized at $0.01$ and momentum set to $0.9$.

\para{DPML Algorithms.}
According to the different optimization methods that the downstream model could adopt, we consider three types of DP algorithms in addition to vanilla DP-SGD:

\vspace{4pt}\noindent$\bullet$\textit{~GEP}~\cite{yu2021not}:
Yu et al. observed that in vanilla DP-SGD, the amount of noise increases with the model size and proposed a solution, GEP~\cite{yu2021not}, to reduce the gradient dimension before adding noise. GEP first calculates an anchor subspace, which contains some gradients of public data, using the power method. Subsequently, it projects the gradient of private data into the anchor subspace, resulting in a low-dimensional gradient embedding and a small-norm residual gradient. These two components are independently processed with the DP mechanism and then combined to update the original weight.

\vspace{4pt}\noindent$\bullet$\textit{~AdpAlloc}~\cite{yu2019differentially}:
It proposes a dynamic noise-adding mechanism, eschewing the practice of maintaining a constant noise multiplier $\sigma$ throughout every training epoch in vanilla DP-SGD. Instead of utilizing a static variance in the Gaussian mechanism, it replaces it with a function of the epoch:
$\mathit{M}(d) = \mathit{f}(d) + \mathbf{N}(0, S_\mathit{f}^2 \cdot \sigma_t^2 )$,
where the value of $\sigma_t$ is contingent upon the final privacy budget, epoch, and schedule function. The schedule function delineates the adjustment of the noise scale throughout training. Yu et al. proposed several pre-defined schedules. For our evaluation, we select \textit{Exponential Decay}, which demonstrated the best average performance in \cite{yu2019differentially}. The mathematical expression for \textit{Exponential Decay} is $\sigma_t=\sigma_0 e^{-k t}$, where $k (k>0)$ represents the decay rate and $\sigma_0$ denotes the initial noise scale.

\vspace{4pt}\noindent$\bullet$\textit{~AdpClip}~\cite{andrew2021differentially}:
An adaptive clipping threshold mechanism is utilized, setting the clip threshold to a specified quantile of the update norm distribution at each epoch. Formally, the clipping threshold \(C_t\) in epoch \(t\) can be computed as $C_t = C_{t-1} \cdot \exp\left(-\eta_C(\overline{b}-\gamma )\right)$, where \(\gamma \in \left[0,1\right]\) is a quantile to be matched, \( \overline{b} \triangleq \frac{1}{m} \sum_{i \in[m]} \mathbb{I}_{x_i \leq C}\) represents the empirical fraction of samples with a value at most \(C\), and \(\eta_C\) is the learning rate with a default value of 0.2, as indicated in \cite{andrew2021differentially}. 
To circumvent the issue of \(\overline{b}\) revealing private information, the Gaussian mechanism is applied to $\overline{b}$: $\tilde{b}^t=\frac{1}{m}\left(\sum_{i \in \mathit{Q}^t} b_i^t+\mathit{N}\left(O, \sigma_b^2\right)\right)$.
This method expends a negligible privacy budget while closely tracking the quantile. While AdpClip was initially designed for federated learning~\cite{Decentralized}, it can also be used in traditional centralized learning scenarios.

\begin{table*}[!t]
\setlength{\tabcolsep}{2.5mm}{
\caption{Comparison of the performance of training from scratch, standard pre-training, vanilla SAM, and \ourmethod{} across various downstream tasks under different privacy budgets.}
\small
\label{tab:main}
\renewcommand{\arraystretch}{1.1}
\begin{center}
\begin{tabular}{cccccccccc}
\toprule
\multirow{2}{*}{\begin{tabular}[c]{@{}c@{}}DPML\\ Algorithms\end{tabular}} & \multirow{2}{*}{\begin{tabular}[c]{@{}c@{}}Upstream\\ Training Method\end{tabular}} & \multicolumn{2}{c}{CIFAR10}       & \multicolumn{2}{c}{SVHN}          & \multicolumn{2}{c}{STL10}         & \multicolumn{2}{c}{Average}       \\ \cline{3-10} 
                            &                               & $\epsilon=1$           & $\epsilon=4$           & $\epsilon=1$           & $\epsilon=4$           & $\epsilon=1$           & $\epsilon=4$           & $\epsilon=1$           & $\epsilon=4$           \\ 
\midrule
\multirow{4}{*}{DP-SGD}     & None (Scratch)~\cite{DPSGD}                       & 0.4288          & 0.5070          & 0.7194          & 0.8380          & 0.2831          & 0.3370          & 0.4771          & 0.5607          \\
                            & Standard Pre-training~\cite{Pretrain_DP}                        & 0.5928          & 0.7210          & 0.7822          & 0.8970          & 0.3282          & 0.5695          & 0.5677          & 0.7292          \\
                            & Vanilla SAM (Ours)                          & 0.6216          & 0.7650          & 0.8042          & 0.9014          & 0.3625          & 0.6212          & 0.5961          & 0.7625          \\
                            & \ourmethod{} (Ours)                   & \textbf{0.6416} & \textbf{0.7746} & \textbf{0.8058} & \textbf{0.9018} & \textbf{0.3951} & \textbf{0.6364} & \textbf{0.6142} & \textbf{0.7709} \\ 
\midrule
\multirow{4}{*}{AdpClip}    & None (Scratch)~\cite{andrew2021differentially}                       & 0.3738          & 0.5348          & 0.6258          & 0.8294          & 0.2270          & 0.3543          & 0.4089          & 0.5728          \\
                            & Standard Pre-training~\cite{Pretrain_DP}                        & 0.4198          & 0.6780          & 0.6196          & 0.8914          & 0.2406          & 0.4890          & 0.4267          & 0.6861          \\
                            & Vanilla SAM (Ours)                          & 0.5962          & 0.7108          & 0.6672          & 0.8962          & 0.2539          & 0.5446          & 0.5058          & 0.7172          \\
                            & \ourmethod{} (Ours)                   & \textbf{0.6008} & \textbf{0.7186} & \textbf{0.6730} & \textbf{0.9012} & \textbf{0.3023} & \textbf{0.5676} & \textbf{0.5254} & \textbf{0.7291} \\ 
\midrule
\multirow{4}{*}{GEP}        & None (Scratch)~\cite{yu2021not}                       & 0.4008          & 0.4672          & 0.5892          & 0.8158          & 0.2360          & 0.3239          & 0.4087          & 0.5356          \\
                            & Standard Pre-training~\cite{Pretrain_DP}                        & 0.6512          & 0.7456          & 0.8078          & 0.8500          & 0.3326          & 0.6002          & 0.5972          & 0.7319          \\
                            & Vanilla SAM (Ours)                          & 0.6808          & 0.7472          & 0.8132          & 0.8538          & 0.3739          & 0.6168          & 0.6226          & 0.7393          \\
                            & \ourmethod{} (Ours)                   & \textbf{0.6890} & \textbf{0.7692} & \textbf{0.8180} & \textbf{0.8686} & \textbf{0.4730} & \textbf{0.6462} & \textbf{0.6600} & \textbf{0.7613} \\ 
\midrule
\multirow{4}{*}{AdpAlloc}   & None (Scratch)~\cite{yu2019differentially}                       & 0.4370          & 0.5166          & 0.6248          & 0.7678          & 0.2923          & 0.3391          & 0.4514          & 0.5412          \\
                            & Standard Pre-training~\cite{Pretrain_DP}                        & 0.4506          & 0.6982          & 0.6604          & 0.8938          & 0.2946          & 0.5104          & 0.4685          & 0.7008          \\
                            & Vanilla SAM (Ours)                           & 0.5296          & 0.7372          & 0.7652          & 0.8998          & 0.2933          & 0.5675          & 0.5294          & 0.7348          \\
                            & \ourmethod{} (Ours)                   & \textbf{0.5352} & \textbf{0.7406} & \textbf{0.7862} & \textbf{0.9008} & \textbf{0.2938} & \textbf{0.6111} & \textbf{0.5384} & \textbf{0.7508} \\ 
\bottomrule
\end{tabular}
\end{center}
}
\end{table*}

\subsection{Effectiveness of \ourmethod{}}
\label{subsec:results}
In this section, we empirically validate the overall effectiveness of \ourmethod{} and conduct ablation studies to illustrate the effectiveness of each component.

\para{Setup}. We conduct experiments using four different DPML algorithms across three distinct private downstream tasks (CIFAR-10, SVHN, and STL-10), under four different pre-training settings: (i) From Scratch: utilizing randomly initialized weight values (i.e., no pre-training); (ii) Standard Pre-training: conducting pre-training on the CIFAR-100 dataset using standard training procedures; (iii) Vanilla Sharpness-Aware Minimization (SAM): pre-training with AMP~\cite{AMP}, employing the optimal perturbation magnitude $\gamma=1.0$ and the optimal batch size $m=100$; (iv) \ourmethod{}: pre-training with the proposed \ourmethod{}, using hyper-parameters described in \autoref{subsec:setup}.

\para{Observations}. \autoref{tab:main} illustrates the downstream accuracy for various settings. In general, we observe that the proposed \ourmethod{} substantially enhances the downstream accuracy in all settings. For instance, when the privacy budget is set to $\epsilon=1$ and the DPML algorithm is configured to vanilla DP-SGD, \ourmethod{} elevates the average accuracy across three downstream datasets to 61.42\%. In contrast, when utilizing models pre-trained normally, the accuracy is only 56.95\%, marking an improvement of over 4\%. When the DPML algorithm is AdpClip, \ourmethod{} boosts the accuracy from 42.67\% to 52.54\%, representing a near 10\% improvement.

We note that, compared to training from scratch, a normally pre-trained model consistently yields an improvement in downstream accuracy. This finding aligns with previous work which noted that transfer learning from public data significantly enhances the accuracy of DPML algorithms~\cite{Pretrain_DP}. Further, we note that employing vanilla SAM to provide parameter robustness can additionally enhance downstream accuracy. For example, when the privacy budget is designated as $\epsilon=1$ and the DPML algorithm is configured to AdpAlloc, using a standard pre-trained model improves the average downstream accuracy by 1.71\%. Meanwhile, applying vanilla SAM can further augment the average downstream accuracy by 6.09\% (compared with standard pre-training), achieving an additional gap that is over three times larger than the previous gap achieved by standard pre-training. This underscores the significance of leveraging parameter robustness to enhance DPML algorithms.

Moreover, we observe that employing \ourmethod{} consistently achieves higher downstream accuracy compared to using vanilla SAM. For instance, when the privacy budget is set to $\epsilon=4$ and the DPML algorithm is configured to GEP, applying vanilla SAM enhances the average downstream accuracy by 0.74\% compared with standard pre-training. Meanwhile, applying \ourmethod{} can further increase the average downstream accuracy by 2.20\% (compared with vanilla SAM). When the privacy budget is $\epsilon=1$, applying vanilla SAM improves the average downstream accuracy by 2.54\%; \ourmethod{}, in contrast, achieves an improvement of 6.28\% compared with standard pre-training. This emphasizes the importance of leveraging decoupled batches to further enhance parameter robustness.

\subsection{Impact of Perturbation Magnitude}
\label{subsec:magnitude}
Recall that \ourmethod{} fundamentally accumulates parameter robustness by introducing adversarial perturbation during the training process, the magnitude of which is determined by the $\gamma$ term. To comprehend how \ourmethod{} influences the performance of downstream tasks, we assess \ourmethod{} utilizing various $\gamma$ settings. In this experiment, we fix the privacy budget at $\epsilon=1$ and the DPML algorithm is configured to vanilla DP-SGD.

\para{Observations}. \autoref{fig:gamma} illustrates the fluctuations in upstream performance, upstream parameter robustness, and the performance of various downstream tasks when applying \ourmethod{} with different $\gamma$ values, representing the perturbation magnitude. We observe that the downstream accuracy trend largely aligns with the trend of upstream parameter robustness, while demonstrating a nearly inverse relationship with the level of upstream accuracy. This observation suggests that parameter robustness is the principal factor through which \ourmethod{} enhances downstream accuracy. Given that the trends of upstream accuracy and downstream accuracy exhibit nearly inverse patterns with changes in $\gamma$, we can essentially rule out the possibility that upstream accuracy is the predominant contributor to downstream accuracy.

Note that when $\gamma$ is further increased to $5.0$, the upstream accuracy experiences a slight increase while the performance across different downstream tasks all decline. This could be attributed to the scenario where the enhancement in parameter robustness becomes constrained and the decline in upstream accuracy becomes significant. The advantages conferred by improved parameter robustness are unable to offset the disadvantages brought about by the reduction in upstream accuracy, since upstream accuracy also contributes to downstream accuracy through enhanced generalization ability. This underscores the vital necessity of maintaining a balance between parameter robustness and generalization ability.

In this experiment, for the first time, we identify and establish a connection between parameter robustness and the performance of DPML algorithms, marking one of the key contributions of this paper.

\begin{figure}[!tbp]
\centerline{\includegraphics[width=.9\linewidth]{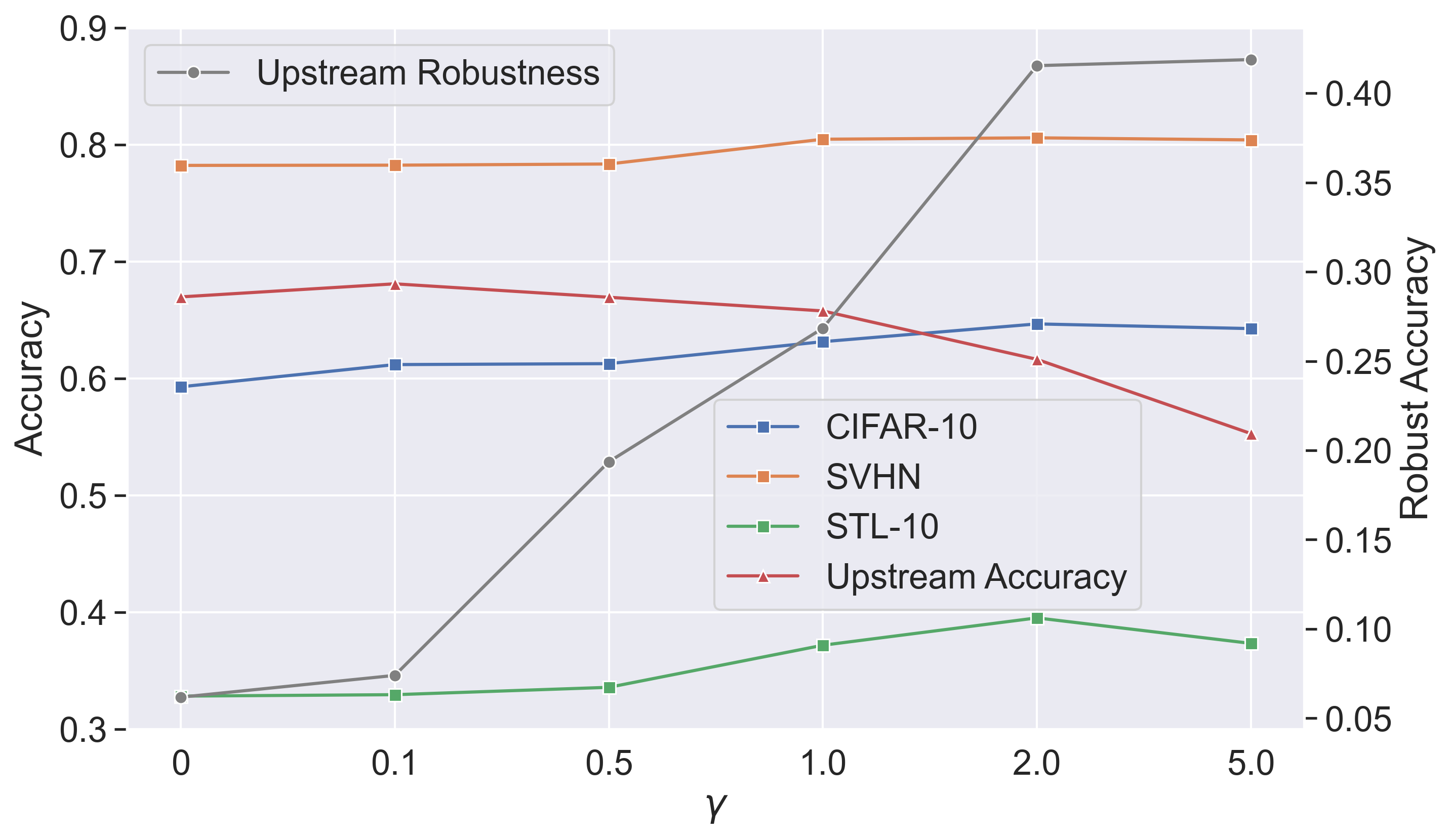}}
\caption{Impact of perturbation magnitude on \ourmethod{}.}
\label{fig:gamma}
\end{figure}

\begin{figure*}[!ht]
\centerline{\includegraphics[width=\linewidth]{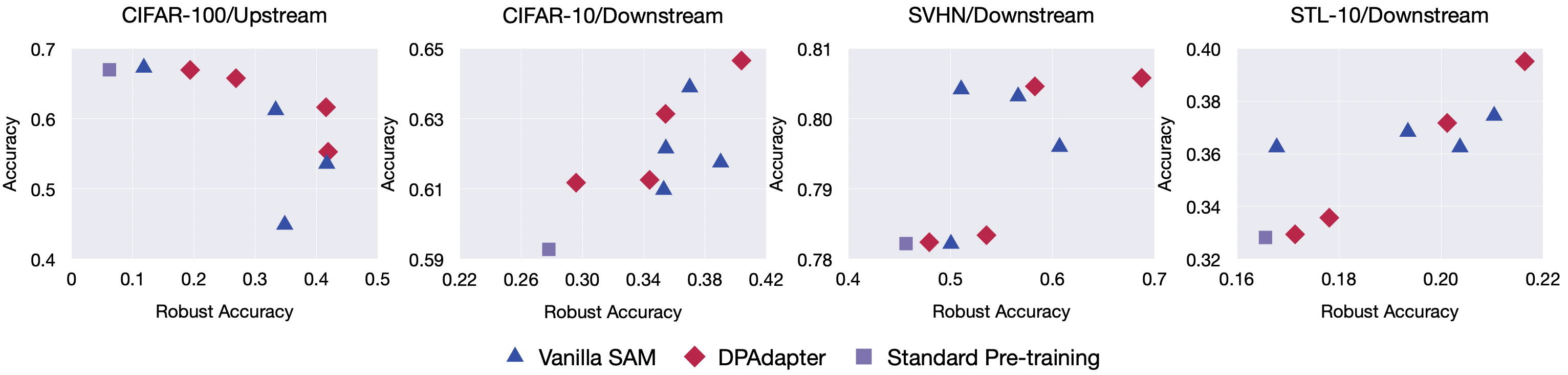}}
\caption{The results of \ourmethod{} and vanilla SAM under different perturbation magnitudes. The results of the vanilla SAM are represented by blue triangles, while the results of the proposed \ourmethod{} are depicted by red diamonds. Results obtained using a vanilla pre-trained model are illustrated by a purple rectangle. All the downstream models have the same privacy budget $\epsilon=1$.}
\vspace{2pt}
\label{fig:eps_1}
\end{figure*}

\subsection{Comparison with Vanilla SAM}
\label{subsec:compare}

\para{Setup}. In this experiment, we consider four perturbation magnitudes: 0.1, 0.5, 1.0, and 2.0. The DPML algorithm is fixed at vanilla DP-SGD and the privacy budget is fixed at $\epsilon=1$. The results of using the vanilla SAM are denoted by blue triangles, while the results of the proposed \ourmethod{} are denoted by red diamonds. The result of using a normally pre-trained model is denoted as a purple rectangle.

\para{Observations}.
\autoref{fig:eps_1} illustrate the influence of perturbation magnitude on both vanilla SAM and \ourmethod{}. The x-axis represents the robust accuracy, which measures parameter robustness, while the y-axis indicates the accuracy, reflecting model performance. We observe that models using standard pre-training typically appear in the bottom-left region, demonstrating a weak robustness and generalization trade-off, identified as crucial for DPML algorithms in \autoref{subsec:magnitude}. Conversely, the results derived from the proposed \ourmethod{} predominantly reside in the top-right region, showcasing an enhancement over the vanilla SAM. This implies that \ourmethod{} can further refine the robustness-generalization trade-off compared to vanilla SAM, which may elucidate its consistent ability to amplify the performance of DPML algorithms with the most substantial improvements. Additional results, obtained when the privacy budget is set to $\epsilon=4$, are provided in Appendix~\ref{app:eps_4}.

\section{Discussion}
\label{sec:discussion}
The current design of \ourmethod{} is specifically targeted towards supervised learning and cannot be directly applied to unsupervised learning techniques like contrastive learning~\cite{Momentum, MoCoV2, MoCoV3}. Thus, the attacker in our scenario needs to have a labeled dataset for pre-training, which might pose a challenge in certain specialized fields. However, the central idea of leveraging parameter robustness to enhance DPML algorithms is a broad concept and not limited to supervised learning. Yet, the question of how to actively induce accumulate parameter robustness in the context of contrastive learning remains open and presents an exciting direction for future research.

Moreover, while our discussion and implementation center on a single-party scenario, it is worth noting that our framework is adaptable to both single-party and multi-party situations. In a multi-party context, the private data originates from multiple sources. Consequently, private fine-tuning processes can be facilitated through federated learning. In these instances, the DP-SGD algorithm would need to be supplanted by the DP-FedAvg algorithm~\cite{Decentralized} to integrate DP noise during aggregation. Although \ourmethod{} provides a promising foundation, the adaptation, execution, and assessment of the methodology in multi-party contexts necessitate further exploration and rigorous validation. Future research endeavors will delve deeper into the complexities and nuances of multi-party scenarios, evaluating the adaptability, scalability, and efficiency of \ourmethod{} in these environments.
\section{Related Work}
\label{sec:related}

\para{Differential Privacy}.
Differential privacy (DP)\cite{DP_survey, DP_Foundations} is a widely-adopted, rigorous mathematical framework that formalizes and measures privacy guarantees, primarily using a parameter termed the privacy budget. It has been employed in various data analysis tasks, such as synthetic dataset generation~\cite{PrivGraph, PrivSyn, PrivTrace, LDPTrace}, marginal release~\cite{CALM}, range query~\cite{AHEAD}, and stream data analysis~\cite{Continuous_Release}. Abadi et al. introduced vanilla DP-SGD~\cite{DPSGD} as the pioneering general DP algorithm for deep learning.
Subsequent research efforts have focused on minimizing the impact of DP on model utility, either by devising new algorithms~\cite{Semi-supervised_DP, more_data, Overbill, Private-kNN} or by relaxing DP definition to suit specific contexts~\cite{Theory_meets_Practice, New_Settings, label_DP}. In contrast, our proposed method, \ourmethod{}, seeks to minimize the impact of DP on model utility by providing a parameter-robust pre-trained model as a service, an approach that has not been explored previously.

\vspace{2pt}
\para{Parameter Robustness}.
Numerous studies have demonstrated that enhancing parameter robustness effectively reduces the generalization gap, thereby improving model generality~\cite{AMP, RMP}. Additional research indicates that improvements in parameter robustness can also mitigate robust overfitting during adversarial training, achieving heightened generality with respect to model robustness against input perturbations~\cite{AWP, RWP, Formalizing_Generalization, Relating_Adversarially}. However, all of these studies utilize the additional benefits conferred by parameter robustness, i.e., generality, without directly leveraging the property of parameter robustness itself. In contrast, the proposed \ourmethod{} is among the first strategies that directly utilize parameter robustness to benefit private learning. This introduces a fresh perspective on the potential applications of the parameter robustness property.

\vspace{2pt}
\para{Pre-trained Model as a Service}.
Adversarial training~\cite{Explaining, PGD} is recognized as a standard method for developing empirically robust classifiers in supervised learning. The core concept involves generating adversarial examples from training instances during the training process and augmenting the training data with these examples. Several studies~\cite{Adversarial_Robustness, Robust_Pre-Training, Kim, REaaS} have generalized adversarial training for the pre-training of robust models in self-supervised learning. Generally, the approach first generates adversarial examples that result in significant loss, which are then used for model pre-training. However, previous studies have focused solely on offering input robustness as a service, neglecting to offer parameter robustness, which is the primary contribution of this paper.

\vspace{2pt}
\para{Attacks and Defenses in Transfer Learning}.
Research has also delved into membership inference attacks within transfer learning scenarios. Liu et al.\cite{EncoderMI} demonstrated a membership inference attack against pre-trained encoders, elucidating the capability to infer whether an input was part of the encoder's training dataset. Concurrently, He et al.\cite{Privacy_CL} introduced a mitigation strategy capable of countering such threats. However, our work pivots the focus towards the privacy risks associated with downstream tasks, rather than those tied to pre-trained encoders. \ourmethod{} represents the first attempt to offer a pre-training method aimed at enhancing defenses against downstream attacks.

\section{Conclusion}
\label{sec:conclusion}
In this study, we unveiled \ourmethod{}, a groundbreaking technique engineered to augment parameter robustness, thereby navigating the traditionally adversarial relationship between Differential Privacy (DP) noise and model utility in Deep Learning. By meticulously reallocating batch sizes for perturbation and gradient calculations, \ourmethod{} refines Sharpness-Aware Minimization (SAM) algorithms, delivering enhanced parameter robustness and, consequently, mitigating the impact of DP noise. Our comprehensive evaluations across several datasets substantiate \ourmethod{}’s capability to substantially bolster the accuracy of DPML algorithms on various downstream tasks, thereby highlighting its potential as a pivotal technique in future privacy-preserving machine learning endeavors.

\bibliographystyle{plain}
\bibliography{bib}

\begin{thebibliography}{10}

\bibitem{DPSGD}
Mart{\'{\i}}n Abadi, Andy Chu, Ian~J. Goodfellow, H.~Brendan McMahan, Ilya Mironov, Kunal Talwar, and Li~Zhang.
\newblock Deep learning with differential privacy.
\newblock In {\em Proceedings of the 2016 {ACM} {SIGSAC} Conference on Computer and Communications Security, Vienna, Austria, October 24-28, 2016}, pages 308--318, 2016.

\bibitem{andrew2021differentially}
Galen Andrew, Om~Thakkar, Brendan McMahan, and Swaroop Ramaswamy.
\newblock Differentially private learning with adaptive clipping.
\newblock {\em Advances in Neural Information Processing Systems}, 34:17455--17466, 2021.

\bibitem{andriushchenko2022towards}
Maksym Andriushchenko and Nicolas Flammarion.
\newblock Towards understanding sharpness-aware minimization.
\newblock In {\em International Conference on Machine Learning}, pages 639--668. PMLR, 2022.

\bibitem{AE_1}
Anish Athalye, Nicholas Carlini, and David~A. Wagner.
\newblock Obfuscated gradients give a false sense of security: Circumventing defenses to adversarial examples.
\newblock In {\em Proceedings of the 35th International Conference on Machine Learning, {ICML} 2018, Stockholmsm{\"{a}}ssan, Stockholm, Sweden, July 10-15, 2018}, pages 274--283, 2018.

\bibitem{Reconstructing}
Borja Balle, Giovanni Cherubin, and Jamie Hayes.
\newblock Reconstructing training data with informed adversaries.
\newblock In {\em 43rd {IEEE} Symposium on Security and Privacy, {SP} 2022, San Francisco, CA, USA, May 22-26, 2022}, pages 1138--1156, 2022.

\bibitem{Evasion}
Battista Biggio, Igino Corona, Davide Maiorca, Blaine Nelson, Nedim Srndic, Pavel Laskov, Giorgio Giacinto, and Fabio Roli.
\newblock Evasion attacks against machine learning at test time.
\newblock In {\em Machine Learning and Knowledge Discovery in Databases - European Conference, {ECML} {PKDD} 2013, Prague, Czech Republic, September 23-27, 2013, Proceedings, Part {III}}, pages 387--402, 2013.

\bibitem{First_Principles}
Nicholas Carlini, Steve Chien, Milad Nasr, Shuang Song, Andreas Terzis, and Florian Tram{\`{e}}r.
\newblock Membership inference attacks from first principles.
\newblock In {\em 43rd {IEEE} Symposium on Security and Privacy, {SP} 2022, San Francisco, CA, USA, May 22-26, 2022}, pages 1897--1914, 2022.

\bibitem{Extracting}
Nicholas Carlini, Florian Tram{\`{e}}r, Eric Wallace, Matthew Jagielski, Ariel Herbert{-}Voss, Katherine Lee, Adam Roberts, Tom~B. Brown, Dawn Song, {\'{U}}lfar Erlingsson, Alina Oprea, and Colin Raffel.
\newblock Extracting training data from large language models.
\newblock In {\em 30th {USENIX} Security Symposium, {USENIX} Security 2021, August 11-13, 2021}, pages 2633--2650, 2021.

\bibitem{iGPT}
Mark Chen, Alec Radford, Rewon Child, Jeffrey Wu, Heewoo Jun, David Luan, and Ilya Sutskever.
\newblock Generative pretraining from pixels.
\newblock In {\em Proceedings of the 37th International Conference on Machine Learning, {ICML} 2020, 13-18 July 2020, Virtual Event}, pages 1691--1703, 2020.

\bibitem{Adversarial_Robustness}
Tianlong Chen, Sijia Liu, Shiyu Chang, Yu~Cheng, Lisa Amini, and Zhangyang Wang.
\newblock Adversarial robustness: From self-supervised pre-training to fine-tuning.
\newblock In {\em 2020 {IEEE/CVF} Conference on Computer Vision and Pattern Recognition, {CVPR} 2020, Seattle, WA, USA, June 13-19, 2020}, pages 696--705, 2020.

\bibitem{MoCoV2}
Xinlei Chen, Haoqi Fan, Ross~B. Girshick, and Kaiming He.
\newblock Improved baselines with momentum contrastive learning.
\newblock {\em CoRR}, abs/2003.04297, 2020.

\bibitem{MoCoV3}
Xinlei Chen, Saining Xie, and Kaiming He.
\newblock An empirical study of training self-supervised vision transformers.
\newblock In {\em 2021 {IEEE/CVF} International Conference on Computer Vision, {ICCV} 2021, Montreal, QC, Canada, October 10-17, 2021}, pages 9620--9629, 2021.

\bibitem{STL-10}
Adam Coates, Andrew~Y. Ng, and Honglak Lee.
\newblock An analysis of single-layer networks in unsupervised feature learning.
\newblock In {\em Proceedings of the Fourteenth International Conference on Artificial Intelligence and Statistics, {AISTATS} 2011, Fort Lauderdale, USA, April 11-13, 2011}, pages 215--223, 2011.

\bibitem{NLP}
Ronan Collobert, Jason Weston, L{\'{e}}on Bottou, Michael Karlen, Koray Kavukcuoglu, and Pavel~P. Kuksa.
\newblock Natural language processing (almost) from scratch.
\newblock {\em J. Mach. Learn. Res.}, 12:2493--2537, 2011.

\bibitem{Unlocking}
Soham De, Leonard Berrada, Jamie Hayes, Samuel~L. Smith, and Borja Balle.
\newblock Unlocking high-accuracy differentially private image classification through scale.
\newblock {\em CoRR}, abs/2204.13650, 2022.

\bibitem{ViT}
Alexey Dosovitskiy, Lucas Beyer, Alexander Kolesnikov, Dirk Weissenborn, Xiaohua Zhai, Thomas Unterthiner, Mostafa Dehghani, Matthias Minderer, Georg Heigold, Sylvain Gelly, Jakob Uszkoreit, and Neil Houlsby.
\newblock An image is worth 16x16 words: Transformers for image recognition at scale.
\newblock In {\em 9th International Conference on Learning Representations, {ICLR} 2021, Virtual Event, Austria, May 3-7, 2021}, 2021.

\bibitem{AHEAD}
Linkang Du, Zhikun Zhang, Shaojie Bai, Changchang Liu, Shouling Ji, Peng Cheng, and Jiming Chen.
\newblock {AHEAD:} adaptive hierarchical decomposition for range query under local differential privacy.
\newblock In {\em {CCS} '21: 2021 {ACM} {SIGSAC} Conference on Computer and Communications Security, Virtual Event, Republic of Korea, November 15 - 19, 2021}, pages 1266--1288, 2021.

\bibitem{LDPTrace}
Yuntao Du, Yujia Hu, Zhikun Zhang, Ziquan Fang, Lu~Chen, Baihua Zheng, and Yunjun Gao.
\newblock Ldptrace: Locally differentially private trajectory synthesis.
\newblock {\em Proc. {VLDB} Endow.}, 16(8):1897--1909, 2023.

\bibitem{DP_survey}
Cynthia Dwork.
\newblock Differential privacy: {A} survey of results.
\newblock In {\em Theory and Applications of Models of Computation, 5th International Conference, {TAMC} 2008, Xi'an, China, April 25-29, 2008. Proceedings}, pages 1--19, 2008.

\bibitem{New_Settings}
Cynthia Dwork.
\newblock Differential privacy in new settings.
\newblock In {\em Proceedings of the Twenty-First Annual {ACM-SIAM} Symposium on Discrete Algorithms, {SODA} 2010, Austin, Texas, USA, January 17-19, 2010}, pages 174--183, 2010.

\bibitem{DP}
Cynthia Dwork, Frank McSherry, Kobbi Nissim, and Adam~D. Smith.
\newblock Calibrating noise to sensitivity in private data analysis.
\newblock In {\em Theory of Cryptography, Third Theory of Cryptography Conference, {TCC} 2006, New York, NY, USA, March 4-7, 2006, Proceedings}, pages 265--284, 2006.

\bibitem{DP_Foundations}
Cynthia Dwork and Aaron Roth.
\newblock The algorithmic foundations of differential privacy.
\newblock {\em Found. Trends Theor. Comput. Sci.}, 9(3-4):211--407, 2014.

\bibitem{sam}
Pierre Foret, Ariel Kleiner, Hossein Mobahi, and Behnam Neyshabur.
\newblock Sharpness-aware minimization for efficiently improving generalization.
\newblock {\em arXiv preprint arXiv:2010.01412}, 2020.

\bibitem{label_DP}
Badih Ghazi, Noah Golowich, Ravi Kumar, Pasin Manurangsi, and Chiyuan Zhang.
\newblock Deep learning with label differential privacy.
\newblock In {\em Advances in Neural Information Processing Systems 34: Annual Conference on Neural Information Processing Systems 2021, NeurIPS 2021, December 6-14, 2021, virtual}, pages 27131--27145, 2021.

\bibitem{Adversarial}
Ian~J. Goodfellow, Jonathon Shlens, and Christian Szegedy.
\newblock Explaining and harnessing adversarial examples.
\newblock In {\em 3rd International Conference on Learning Representations, {ICLR} 2015, San Diego, CA, USA, May 7-9, 2015, Conference Track Proceedings}, 2015.

\bibitem{Explaining}
Ian~J. Goodfellow, Jonathon Shlens, and Christian Szegedy.
\newblock Explaining and harnessing adversarial examples.
\newblock In {\em 3rd International Conference on Learning Representations, {ICLR} 2015, San Diego, CA, USA, May 7-9, 2015, Conference Track Proceedings}, 2015.

\bibitem{lr_bs}
Priya Goyal, Piotr Doll{\'{a}}r, Ross~B. Girshick, Pieter Noordhuis, Lukasz Wesolowski, Aapo Kyrola, Andrew Tulloch, Yangqing Jia, and Kaiming He.
\newblock Accurate, large minibatch {SGD:} training imagenet in 1 hour.
\newblock {\em CoRR}, abs/1706.02677, 2017.

\bibitem{Momentum}
Kaiming He, Haoqi Fan, Yuxin Wu, Saining Xie, and Ross~B. Girshick.
\newblock Momentum contrast for unsupervised visual representation learning.
\newblock In {\em 2020 {IEEE/CVF} Conference on Computer Vision and Pattern Recognition, {CVPR} 2020, Seattle, WA, USA, June 13-19, 2020}, pages 9726--9735, 2020.

\bibitem{ResNet}
Kaiming He, Xiangyu Zhang, Shaoqing Ren, and Jian Sun.
\newblock Deep residual learning for image recognition.
\newblock In {\em 2016 {IEEE} Conference on Computer Vision and Pattern Recognition, {CVPR} 2016, Las Vegas, NV, USA, June 27-30, 2016}, pages 770--778, 2016.

\bibitem{Privacy_CL}
Xinlei He and Yang Zhang.
\newblock Quantifying and mitigating privacy risks of contrastive learning.
\newblock In {\em {CCS} '21: 2021 {ACM} {SIGSAC} Conference on Computer and Communications Security, Virtual Event, Republic of Korea, November 15 - 19, 2021}, pages 845--863, 2021.

\bibitem{Robust_Pre-Training}
Ziyu Jiang, Tianlong Chen, Ting Chen, and Zhangyang Wang.
\newblock Robust pre-training by adversarial contrastive learning.
\newblock In {\em Advances in Neural Information Processing Systems 33: Annual Conference on Neural Information Processing Systems 2020, NeurIPS 2020, December 6-12, 2020, virtual}, 2020.

\bibitem{Kim}
Minseon Kim, Jihoon Tack, and Sung~Ju Hwang.
\newblock Adversarial self-supervised contrastive learning.
\newblock In {\em Advances in Neural Information Processing Systems 33: Annual Conference on Neural Information Processing Systems 2020, NeurIPS 2020, December 6-12, 2020, virtual}, 2020.

\bibitem{krizhevsky2009learning}
Alex Krizhevsky.
\newblock Learning multiple layers of features from tiny images.
\newblock Tech. report, University of Toronto, 2009.

\bibitem{Pretrain_DP}
Alexey Kurakin, Steve Chien, Shuang Song, Roxana Geambasu, Andreas Terzis, and Abhradeep Thakurta.
\newblock Toward training at imagenet scale with differential privacy.
\newblock {\em CoRR}, abs/2201.12328, 2022.

\bibitem{RMP}
Tao Li, Weihao Yan, Zehao Lei, Yingwen Wu, Kun Fang, Ming Yang, and Xiaolin Huang.
\newblock Efficient generalization improvement guided by random weight perturbation.
\newblock {\em CoRR}, abs/2211.11489, 2022.

\bibitem{LLM_DP}
Xuechen Li, Florian Tram{\`{e}}r, Percy Liang, and Tatsunori Hashimoto.
\newblock Large language models can be strong differentially private learners.
\newblock In {\em The Tenth International Conference on Learning Representations, {ICLR} 2022, Virtual Event, April 25-29, 2022}, 2022.

\bibitem{Label-Only}
Zheng Li and Yang Zhang.
\newblock Membership leakage in label-only exposures.
\newblock In {\em {CCS} '21: 2021 {ACM} {SIGSAC} Conference on Computer and Communications Security, Virtual Event, Republic of Korea, November 15 - 19, 2021}, pages 880--895, 2021.

\bibitem{EncoderMI}
Hongbin Liu, Jinyuan Jia, Wenjie Qu, and Neil~Zhenqiang Gong.
\newblock Encodermi: Membership inference against pre-trained encoders in contrastive learning.
\newblock In {\em {CCS} '21: 2021 {ACM} {SIGSAC} Conference on Computer and Communications Security, Virtual Event, Republic of Korea, November 15 - 19, 2021}, pages 2081--2095, 2021.

\bibitem{ML-Doctor}
Yugeng Liu, Rui Wen, Xinlei He, Ahmed Salem, Zhikun Zhang, Michael Backes, Emiliano~De Cristofaro, Mario Fritz, and Yang Zhang.
\newblock Ml-doctor: Holistic risk assessment of inference attacks against machine learning models.
\newblock In {\em 31st {USENIX} Security Symposium, {USENIX} Security 2022, Boston, MA, USA, August 10-12, 2022}, pages 4525--4542, 2022.

\bibitem{Theory_meets_Practice}
Ashwin Machanavajjhala, Daniel Kifer, John~M. Abowd, Johannes Gehrke, and Lars Vilhuber.
\newblock Privacy: Theory meets practice on the map.
\newblock In {\em Proceedings of the 24th International Conference on Data Engineering, {ICDE} 2008, April 7-12, 2008, Canc{\'{u}}n, Mexico}, pages 277--286, 2008.

\bibitem{PGD}
Aleksander Madry, Aleksandar Makelov, Ludwig Schmidt, Dimitris Tsipras, and Adrian Vladu.
\newblock Towards deep learning models resistant to adversarial attacks.
\newblock In {\em 6th International Conference on Learning Representations, {ICLR} 2018, Vancouver, BC, Canada, April 30 - May 3, 2018, Conference Track Proceedings}, 2018.

\bibitem{Decentralized}
Brendan McMahan, Eider Moore, Daniel Ramage, Seth Hampson, and Blaise~Ag{\"{u}}era y~Arcas.
\newblock Communication-efficient learning of deep networks from decentralized data.
\newblock In {\em Proceedings of the 20th International Conference on Artificial Intelligence and Statistics, {AISTATS} 2017, 20-22 April 2017, Fort Lauderdale, FL, {USA}}, pages 1273--1282, 2017.

\bibitem{DP_Transfer}
Harsh Mehta, Abhradeep Thakurta, Alexey Kurakin, and Ashok Cutkosky.
\newblock Large scale transfer learning for differentially private image classification.
\newblock {\em CoRR}, abs/2205.02973, 2022.

\bibitem{Gaussian_Mechanism}
Ilya Mironov, Kunal Talwar, and Li~Zhang.
\newblock R{\'{e}}nyi differential privacy of the sampled gaussian mechanism.
\newblock {\em CoRR}, abs/1908.10530, 2019.

\bibitem{svhn}
Yuval Netzer, Tao Wang, Adam Coates, Alessandro Bissacco, Bo~Wu, and Andrew~Y Ng.
\newblock Reading digits in natural images with unsupervised feature learning.
\newblock In {\em Advances in Neural Information Processing Systems (NIPS)}, pages 2843--2851, 2011.

\bibitem{transfer}
Sinno~Jialin Pan and Qiang Yang.
\newblock A survey on transfer learning.
\newblock {\em {IEEE} Trans. Knowl. Data Eng.}, 22(10):1345--1359, 2010.

\bibitem{Semi-supervised_DP}
Nicolas Papernot, Mart{\'{\i}}n Abadi, {\'{U}}lfar Erlingsson, Ian~J. Goodfellow, and Kunal Talwar.
\newblock Semi-supervised knowledge transfer for deep learning from private training data.
\newblock In {\em 5th International Conference on Learning Representations, {ICLR} 2017, Toulon, France, April 24-26, 2017, Conference Track Proceedings}, 2017.

\bibitem{Limitations_AE}
Nicolas Papernot, Patrick~D. McDaniel, Somesh Jha, Matt Fredrikson, Z.~Berkay Celik, and Ananthram Swami.
\newblock The limitations of deep learning in adversarial settings.
\newblock In {\em {IEEE} European Symposium on Security and Privacy, EuroS{\&}P 2016, Saarbr{\"{u}}cken, Germany, March 21-24, 2016}, pages 372--387, 2016.

\bibitem{qu2023reaas}
Wenjie Qu, Jinyuan Jia, and Neil~Zhenqiang Gong.
\newblock Reaas: Enabling adversarially robust downstream classifiers via robust encoder as a service.
\newblock {\em arXiv preprint arXiv:2301.02905}, 2023.

\bibitem{REaaS}
Wenjie Qu, Jinyuan Jia, and Neil~Zhenqiang Gong.
\newblock Reaas: Enabling adversarially robust downstream classifiers via robust encoder as a service.
\newblock In {\em 30th Annual Network and Distributed System Security Symposium, {NDSS} 2023, San Diego, California, USA, February 27 - March 3, 2023}, 2023.

\bibitem{RCNN}
Shaoqing Ren, Kaiming He, Ross~B. Girshick, and Jian Sun.
\newblock Faster {R-CNN:} towards real-time object detection with region proposal networks.
\newblock In {\em Advances in Neural Information Processing Systems 28: Annual Conference on Neural Information Processing Systems 2015, December 7-12, 2015, Montreal, Quebec, Canada}, pages 91--99, 2015.

\bibitem{Relating_Adversarially}
David Stutz, Matthias Hein, and Bernt Schiele.
\newblock Relating adversarially robust generalization to flat minima.
\newblock In {\em 2021 {IEEE/CVF} International Conference on Computer Vision, {ICCV} 2021, Montreal, QC, Canada, October 10-17, 2021}, pages 7787--7797, 2021.

\bibitem{Intriguing}
Christian Szegedy, Wojciech Zaremba, Ilya Sutskever, Joan Bruna, Dumitru Erhan, Ian~J. Goodfellow, and Rob Fergus.
\newblock Intriguing properties of neural networks.
\newblock In {\em 2nd International Conference on Learning Representations, {ICLR} 2014, Banff, AB, Canada, April 14-16, 2014, Conference Track Proceedings}, 2014.

\bibitem{more_data}
Florian Tram{\`{e}}r and Dan Boneh.
\newblock Differentially private learning needs better features (or much more data).
\newblock In {\em 9th International Conference on Learning Representations, {ICLR} 2021, Virtual Event, Austria, May 3-7, 2021}, 2021.

\bibitem{Formalizing_Generalization}
Yu{-}Lin Tsai, Chia{-}Yi Hsu, Chia{-}Mu Yu, and Pin{-}Yu Chen.
\newblock Formalizing generalization and adversarial robustness of neural networks to weight perturbations.
\newblock In {\em Advances in Neural Information Processing Systems 34: Annual Conference on Neural Information Processing Systems 2021, NeurIPS 2021, December 6-14, 2021, virtual}, pages 19692--19704, 2021.

\bibitem{PrivTrace}
Haiming Wang, Zhikun Zhang, Tianhao Wang, Shibo He, Michael Backes, Jiming Chen, and Yang Zhang.
\newblock Privtrace: Differentially private trajectory synthesis by adaptive markov models.
\newblock In {\em 32nd {USENIX} Security Symposium, {USENIX} Security 2023, Anaheim, CA, USA, August 9-11, 2023}, 2023.

\bibitem{Continuous_Release}
Tianhao Wang, Joann~Qiongna Chen, Zhikun Zhang, Dong Su, Yueqiang Cheng, Zhou Li, Ninghui Li, and Somesh Jha.
\newblock Continuous release of data streams under both centralized and local differential privacy.
\newblock In {\em {CCS} '21: 2021 {ACM} {SIGSAC} Conference on Computer and Communications Security, Virtual Event, Republic of Korea, November 15 - 19, 2021}, pages 1237--1253, 2021.

\bibitem{DPMLBench}
Chengkun Wei, Minghu Zhao, Zhikun Zhang, Min Chen, Wenlong Meng, Bo~Liu, Yuan Fan, and Wenzhi Chen.
\newblock Dpmlbench: Holistic evaluation of differentially private machine learning.
\newblock {\em CoRR}, abs/2305.05900, 2023.

\bibitem{AWP}
Dongxian Wu, Shu{-}Tao Xia, and Yisen Wang.
\newblock Adversarial weight perturbation helps robust generalization.
\newblock In {\em Advances in Neural Information Processing Systems 33: Annual Conference on Neural Information Processing Systems 2020, NeurIPS 2020, December 6-12, 2020, virtual}, 2020.

\bibitem{transfer_theory}
Xuetong Wu, Jonathan~H Manton, Uwe Aickelin, and Jingge Zhu.
\newblock An information-theoretic analysis for transfer learning: Error bounds and applications.
\newblock {\em arXiv preprint arXiv:2207.05377}, 2022.

\bibitem{RWP}
Chaojian Yu, Bo~Han, Mingming Gong, Li~Shen, Shiming Ge, Du~Bo, and Tongliang Liu.
\newblock Robust weight perturbation for adversarial training.
\newblock In {\em Proceedings of the Thirty-First International Joint Conference on Artificial Intelligence, {IJCAI} 2022, Vienna, Austria, 23-29 July 2022}, pages 3688--3694, 2022.

\bibitem{DP_finetune}
Da~Yu, Saurabh Naik, Arturs Backurs, Sivakanth Gopi, Huseyin~A. Inan, Gautam Kamath, Janardhan Kulkarni, Yin~Tat Lee, Andre Manoel, Lukas Wutschitz, Sergey Yekhanin, and Huishuai Zhang.
\newblock Differentially private fine-tuning of language models.
\newblock In {\em The Tenth International Conference on Learning Representations, {ICLR} 2022, Virtual Event, April 25-29, 2022}, 2022.

\bibitem{Overbill}
Da~Yu, Huishuai Zhang, Wei Chen, and Tie{-}Yan Liu.
\newblock Do not let privacy overbill utility: Gradient embedding perturbation for private learning.
\newblock In {\em 9th International Conference on Learning Representations, {ICLR} 2021, Virtual Event, Austria, May 3-7, 2021}, 2021.

\bibitem{yu2021not}
Da~Yu, Huishuai Zhang, Wei Chen, and Tie-Yan Liu.
\newblock Do not let privacy overbill utility: Gradient embedding perturbation for private learning.
\newblock In {\em International Conference on Learning Representations (ICLR)}, 2021.

\bibitem{yu2019differentially}
Lei Yu, Ling Liu, Calton Pu, Mehmet~Emre Gursoy, and Stacey Truex.
\newblock Differentially private model publishing for deep learning.
\newblock In {\em 2019 IEEE Symposium on Security and Privacy (SP)}, pages 332--349. IEEE, 2019.

\bibitem{PrivGraph}
Quan Yuan, Zhikun Zhang, Linkang Du, Min Chen, Peng Cheng, and Mingyang Sun.
\newblock Privgraph: Differentially private graph data publication by exploiting community information.
\newblock In {\em 32nd {USENIX} Security Symposium, {USENIX} Security 2023, Anaheim, CA, USA, August 9-11, 2023}, 2023.

\bibitem{zhang2017efficient}
Jiaqi Zhang, Kai Zheng, Wenlong Mou, and Liwei Wang.
\newblock Efficient private erm for smooth objectives.
\newblock In {\em Proceedings of the 26th International Joint Conference on Artificial Intelligence}, pages 3922--3928, 2017.

\bibitem{CALM}
Zhikun Zhang, Tianhao Wang, Ninghui Li, Shibo He, and Jiming Chen.
\newblock {CALM:} consistent adaptive local marginal for marginal release under local differential privacy.
\newblock In {\em Proceedings of the 2018 {ACM} {SIGSAC} Conference on Computer and Communications Security, {CCS} 2018, Toronto, ON, Canada, October 15-19, 2018}, pages 212--229, 2018.

\bibitem{PrivSyn}
Zhikun Zhang, Tianhao Wang, Ninghui Li, Jean Honorio, Michael Backes, Shibo He, Jiming Chen, and Yang Zhang.
\newblock Privsyn: Differentially private data synthesis.
\newblock In {\em 30th {USENIX} Security Symposium, {USENIX} Security 2021, August 11-13, 2021}, pages 929--946, 2021.

\bibitem{AMP}
Yaowei Zheng, Richong Zhang, and Yongyi Mao.
\newblock Regularizing neural networks via adversarial model perturbation.
\newblock In {\em {IEEE} Conference on Computer Vision and Pattern Recognition, {CVPR} 2021, virtual, June 19-25, 2021}, pages 8156--8165, 2021.

\bibitem{Private-kNN}
Yuqing Zhu, Xiang Yu, Manmohan Chandraker, and Yu{-}Xiang Wang.
\newblock Private-knn: Practical differential privacy for computer vision.
\newblock In {\em 2020 {IEEE/CVF} Conference on Computer Vision and Pattern Recognition, {CVPR} 2020, Seattle, WA, USA, June 13-19, 2020}, pages 11851--11859, 2020.

\end{thebibliography}

\appendix
\newpage
\onecolumn
\section*{Appendix}
\label{sec:appendix}

\section{Theoretical Results}

\subsection{Proof of \autoref{thm:1}}
\label{app:thm1}
We first restate Theorem \autoref{thm:1} as follows. 
\begin{theorem}[Formal version of \autoref{thm:1}]
\label{thm:11}
Denote $\hat\beta:=(\rho^2\beta_2 + \beta\beta_1)$. Select step sizes $\eta_1 = \eta_2 = 1/(4\hat\beta)$. Then Algorithm \ref{alg:DPAdapter} enjoys the following utility bound:
\begin{align}
        &\frac{1}{T}\sum_{t=1}^T \mathbb{E}(L_\cD(\btheta_t) - \inf_{\hat \btheta} L_\cD(\hat \btheta) )\leq \mu\cdot\bigg(\frac{L_\cD(\btheta_0)}{T} + \frac{\hat\sigma^2}{16\hat\beta}(1/|\cB_2| + 1/|\cB_1|)\bigg).\notag
    \end{align}
\end{theorem}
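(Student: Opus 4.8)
The plan is to reduce the argument to a perturbed stochastic-gradient descent analysis for a smooth objective, and then invoke the PL-condition to pass from a gradient-norm bound to the stated suboptimality bound. The work divides into three parts: identifying the correct smoothness constant (this is where $\rho$ enters), running the descent recursion while carefully tracking the two independent batches, and converting the telescoped bound via the PL-condition.

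First I would establish that each $L_i(\btheta) = \ell(f_{\btheta}(\boldsymbol{x}_i),\boldsymbol{y}_i)$, and hence $L_\cD$, is $\hat\beta$-smooth with $\hat\beta = \rho^2\beta_2 + \beta\beta_1$. Writing $\nabla_{\btheta} L_i = \ell'(f_{\btheta}(\boldsymbol{x}_i))\,\nabla_{\btheta} f_{\btheta}(\boldsymbol{x}_i)$ and inserting the hybrid term $\ell'(f_{\btheta})\nabla_{\btheta} f_{\btheta'}$, the difference $\|\nabla L_i(\btheta) - \nabla L_i(\btheta')\|$ splits into two pieces. One is bounded by the $\beta_1$-Lipschitzness of $\ell$ (Assumption 2) times the $\beta$-smoothness of $f$ (Assumption 1), giving $\beta\beta_1\|\btheta-\btheta'\|$; the other is bounded by the $\beta_2$-smoothness of $\ell$ (Assumption 3) together with $\|f_{\btheta}-f_{\btheta'}\|\le\rho\|\btheta-\btheta'\|$ and $\|\nabla_{\btheta}f_{\btheta'}\|\le\rho$ (both consequences of the definition of $\rho$), giving $\rho^2\beta_2\|\btheta-\btheta'\|$. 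Summing yields exactly $\hat\beta$, which is the only place where parameter robustness feeds into the convergence rate.

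Next I would apply the descent lemma for $\hat\beta$-smooth $L_\cD$ to the update $\btheta_{t+1}=\btheta_t-\eta_2 g_t$ with $g_t := \nabla L_{\cB_2}(\btheta_t+\Delta_t)$, and decompose $g_t = \nabla L_\cD(\btheta_t) + \xi_t + b_t$, where $\xi_t := \nabla L_{\cB_2}(\btheta_t+\Delta_t)-\nabla L_\cD(\btheta_t+\Delta_t)$ is the $\cB_2$-minibatch noise (conditionally mean-zero given $\cB_1$, with $\mathbb{E}\|\xi_t\|^2\le\hat\sigma^2/|\cB_2|$ by Assumption 4) and $b_t := \nabla L_\cD(\btheta_t+\Delta_t)-\nabla L_\cD(\btheta_t)$ is the perturbation bias. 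Since $\cB_1$ and $\cB_2$ are drawn independently, every cross term involving $\xi_t$ vanishes in expectation, leaving the inner-product term $\langle\nabla L_\cD(\btheta_t),b_t\rangle$ and the squared-norm contributions. The bias is controlled through smoothness by $\|b_t\|\le\hat\beta\|\Delta_t\| = \hat\beta\eta_1\|\nabla L_{\cB_1}(\btheta_t)\|$, and taking expectations with $\mathbb{E}\|\nabla L_{\cB_1}(\btheta_t)\|^2\le\|\nabla L_\cD(\btheta_t)\|^2+\hat\sigma^2/|\cB_1|$ (Assumption 4 again) is precisely what injects the $1/|\cB_1|$ term and explains why a larger perturbation batch improves the bound.

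Finally I would set $\eta_1=\eta_2=1/(4\hat\beta)$ and apply weighted Young inequalities to the $\langle\nabla L_\cD(\btheta_t),b_t\rangle$ terms so that the net coefficient multiplying $\mathbb{E}\|\nabla L_\cD(\btheta_t)\|^2$ remains strictly negative, producing a one-step recursion of the schematic form $\mathbb{E}[L_\cD(\btheta_{t+1})] \le \mathbb{E}[L_\cD(\btheta_t)] - \mathbb{E}\|\nabla L_\cD(\btheta_t)\|^2 + \tfrac{\hat\sigma^2}{4}(1/|\cB_2|+1/|\cB_1|)$. Telescoping over $t$, dividing by $T$, and using $L_\cD(\btheta_0)-\inf L_\cD \le L_\cD(\btheta_0)$ gives an averaged gradient-norm bound; applying the $\mu$-PL-condition $L_\cD(\btheta)-\inf L_\cD \le \mu\|\nabla L_\cD(\btheta)\|^2$ termwise then yields the claimed bound with the factor $\mu$ out front and the coefficient $\hat\sigma^2/(16\hat\beta)$. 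The main obstacle is the bookkeeping in this last step: because $b_t$ couples the two batches and feeds a $\|\nabla L_\cD(\btheta_t)\|^2$ term back into the recursion, the Young weights and the specific choice $\eta_1=\eta_2=1/(4\hat\beta)$ must be tuned precisely to keep the gradient coefficient negative while landing exactly on the stated constant.
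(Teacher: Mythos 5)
Your proposal is correct and takes essentially the same route as the paper: the paper likewise first proves (its Lemma~A.1) that each $L_i$ is $\beta_1\rho$-Lipschitz and $\hat\beta$-smooth via exactly your hybrid-term split, then combines a per-step inner-product lower bound and a descent inequality, sets $\eta_1=\eta_2=1/(4\hat\beta)$, telescopes, and converts via the PL-condition. The only difference is presentational: the paper imports the two per-step inequalities as Lemmas~13--14 of Andriushchenko--Flammarion (2022), whereas you re-derive their content inline through the bias--variance decomposition $g_t=\nabla L_{\cD}(\btheta_t)+\xi_t+b_t$ with the bound $\|b_t\|\le\hat\beta\eta_1\|\nabla L_{\cB_1}(\btheta_t)\|$.
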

To prove Theorem \ref{thm:11}, we need the following lemma which estimates the Lipschitz constant and smoothness constant of $L_i$. 
\begin{lemma}\label{lemma:1}
    For each $i$, $L_i$ is $\beta_1\rho$-Lipschitz continuous and $(\rho^2\beta_2+\beta\beta_1)$-smooth. 
\end{lemma}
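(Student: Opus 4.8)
The plan is to obtain both bounds from a single chain-rule computation, using the fact that the parameter-robustness constant $\rho$ is exactly a uniform Lipschitz bound on $f_{\btheta}(\boldsymbol{x})$ in $\btheta$. Writing $u := f_{\btheta}(\boldsymbol{x}_i)$ and letting $\ell'$ denote the derivative of $\ell$ in its first (prediction) slot, the chain rule gives $\nabla_{\btheta} L_i(\btheta) = \ell'(u,\boldsymbol{y}_i)\,\nabla_{\btheta} f_{\btheta}(\boldsymbol{x}_i)$. The first thing I would record is that the definition of $\rho$ says $|f_{\btheta+\Delta}(\boldsymbol{x}) - f_{\btheta}(\boldsymbol{x})| \le \rho\|\Delta\|_2$ for all $\boldsymbol{x},\btheta,\Delta$; under differentiability this is equivalent to $\|\nabla_{\btheta} f_{\btheta}(\boldsymbol{x})\|_2 \le \rho$ everywhere, and it also yields the finite-difference estimate $|f_{\btheta}(\boldsymbol{x}_i) - f_{\btheta'}(\boldsymbol{x}_i)| \le \rho\|\btheta - \btheta'\|_2$. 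These two consequences of $\rho$ are what couple the assumptions on $f$ with those on $\ell$.

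For the Lipschitz claim I would simply bound the gradient norm: $\|\nabla_{\btheta} L_i(\btheta)\|_2 = |\ell'(u,\boldsymbol{y}_i)|\,\|\nabla_{\btheta} f_{\btheta}(\boldsymbol{x}_i)\|_2 \le \beta_1\cdot\rho$, where $|\ell'|\le\beta_1$ comes from Assumption~2 (Lipschitzness of $\ell$ in its first argument) and $\|\nabla_{\btheta} f\|_2\le\rho$ is the observation above. A uniform gradient-norm bound is equivalent to $L_i$ being $\beta_1\rho$-Lipschitz, which is the first half of the lemma.

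For the smoothness claim I would fix $\btheta,\btheta'$, set $u=f_{\btheta}(\boldsymbol{x}_i)$, $u'=f_{\btheta'}(\boldsymbol{x}_i)$, and split the difference of gradients by adding and subtracting $\ell'(u,\boldsymbol{y}_i)\,\nabla_{\btheta} f_{\btheta'}(\boldsymbol{x}_i)$:
\begin{align}
\nabla_{\btheta} L_i(\btheta) - \nabla_{\btheta} L_i(\btheta') = \ell'(u,\boldsymbol{y}_i)\big(\nabla_{\btheta} f_{\btheta}(\boldsymbol{x}_i) - \nabla_{\btheta} f_{\btheta'}(\boldsymbol{x}_i)\big) + \big(\ell'(u,\boldsymbol{y}_i) - \ell'(u',\boldsymbol{y}_i)\big)\nabla_{\btheta} f_{\btheta'}(\boldsymbol{x}_i). \notag
\end{align}
The first term is bounded by $\beta_1\cdot\beta\,\|\btheta-\btheta'\|_2$ using $|\ell'|\le\beta_1$ (Assumption~2) and the $\beta$-smoothness of $f$ in $\btheta$ (Assumption~1). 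The second term is bounded by $(\beta_2|u-u'|)\cdot\rho \le \beta_2\cdot(\rho\|\btheta-\btheta'\|_2)\cdot\rho = \rho^2\beta_2\,\|\btheta-\btheta'\|_2$, using the $\beta_2$-smoothness of $\ell$ (Assumption~3, i.e.\ Lipschitzness of $\ell'$), the $\rho$-Lipschitz bound on $\btheta\mapsto f_{\btheta}(\boldsymbol{x}_i)$, and $\|\nabla_{\btheta} f\|_2\le\rho$. Adding the two contributions yields the $(\rho^2\beta_2+\beta\beta_1)$-smoothness constant.

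I expect the main obstacle to be interpretational rather than computational. Assumptions~2 and~3 are written as Lipschitzness and smoothness of $\ell$ in $\boldsymbol{x}$, but the lemma composes $\ell$ with $f_{\btheta}$, so these properties must really be read as holding in the prediction argument $u=f_{\btheta}(\boldsymbol{x}_i)$ on which $\ell$ depends; I would state this reading explicitly before invoking them. A second point to pin down is the passage from the max-based definition of $\rho$ to the two differential consequences $\|\nabla_{\btheta} f\|_2\le\rho$ and the $\rho$-Lipschitz bound, which requires $f_{\btheta}$ to be differentiable in $\btheta$ so that the directional limit in the definition of $\rho$ is controlled by the gradient. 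Once these two readings are fixed, everything reduces to the routine add-and-subtract estimate above.
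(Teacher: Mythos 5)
Your proposal is correct and takes essentially the same route as the paper: the smoothness half is the paper's proof verbatim up to the symmetric choice of cross term (the paper adds and subtracts $\nabla_x\ell(f_{\boldsymbol{\theta}'}(\boldsymbol{x}_i),\boldsymbol{y}_i)\,\nabla f_{\boldsymbol{\theta}}(\boldsymbol{x}_i)$ rather than $\nabla_x\ell(f_{\boldsymbol{\theta}}(\boldsymbol{x}_i),\boldsymbol{y}_i)\,\nabla f_{\boldsymbol{\theta}'}(\boldsymbol{x}_i)$), using exactly the two consequences of $\rho$ you isolate, namely $\|\nabla_{\boldsymbol{\theta}} f_{\boldsymbol{\theta}}(\boldsymbol{x})\|_2\le\rho$ and the $\rho$-Lipschitz bound on $\boldsymbol{\theta}\mapsto f_{\boldsymbol{\theta}}(\boldsymbol{x}_i)$, together with the same reading of Assumptions 2--3 in the prediction slot. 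The only cosmetic difference is the Lipschitz half, which the paper obtains directly at the function level via $|\ell(f_{\boldsymbol{\theta}}(\boldsymbol{x}_i),\boldsymbol{y}_i)-\ell(f_{\boldsymbol{\theta}'}(\boldsymbol{x}_i),\boldsymbol{y}_i)|\le\beta_1|f_{\boldsymbol{\theta}}(\boldsymbol{x}_i)-f_{\boldsymbol{\theta}'}(\boldsymbol{x}_i)|\le\beta_1\rho\|\boldsymbol{\theta}-\boldsymbol{\theta}'\|_2$, avoiding the differentiability assumption your gradient-norm argument needs (though the chain-rule smoothness computation requires it anyway).
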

\begin{proof}
     We have for all $\btheta, \btheta'$, 
    \begin{align}
        |L_i(\btheta) - L_i(\btheta')| &= |\ell(f_{\btheta}(x_i), y_i) - \ell(f_{\btheta'}(x_i), y_i)|\notag \\
        &\leq \beta_1|f_{\btheta}(x_i) - f_{\btheta'}(x_i)| \notag \\
        & \leq \beta_1\rho\|\btheta - \btheta'\|_2,\notag
    \end{align}
    and
    \begin{align}
        &\|\nabla L_i(\btheta) - \nabla L_i(\btheta')\|_2\notag \\
        & = \|\nabla_x \ell(f_{\btheta}(x_i), y_i) \nabla f_{\btheta}(x_i) - \nabla_x \ell(f_{\btheta'}(x_i), y_i) \nabla f_{\btheta'}(x_i)\|_2\notag \\
        & \leq \|\nabla_x \ell(f_{\btheta}(x_i), y_i) \nabla f_{\btheta}(x_i) - \nabla_x \ell(f_{\btheta'}(x_i), y_i) \nabla f_{\btheta}(x_i)\|_2 + \|\nabla_x \ell(f_{\btheta'}(x_i), y_i) \nabla f_{\btheta}(x_i) - \nabla_x \ell(f_{\btheta'}(x_i), y_i) \nabla f_{\btheta'}(x_i)\|_2\notag \\
        & \leq \|\nabla f_{\btheta}(x_i)\|_2|\nabla_x \ell(f_{\btheta}(x_i), y_i) - \nabla_x \ell(f_{\btheta'}(x_i), y_i)| + |\nabla_x \ell(f_{\btheta'}(x_i), y_i)|\|\nabla f_{\btheta}(x_i) - \nabla f_{\btheta'}(x_i)\|_2\notag \\
        & \leq \rho^2\beta_2\|\btheta - \btheta'\|_2 + \beta_1\beta\|\btheta - \btheta'\|_2.\notag
    \end{align}
    Therefore, $L_i$ is $\beta_1\rho$-Lipschitz continuous and $(\rho^2\beta_2 + \beta\beta_1)$-smooth.
\end{proof}
We now begin to prove Theorem \ref{thm:11}. 
\begin{proof}[Proof of Theorem \ref{thm:11}]
The proof is adapted from \cite{andriushchenko2022towards}. First, by Lemma \ref{lemma:1} we know that $L_i$ is $\beta_1\rho$-Lipschitz continuous and $(\rho^2\beta_2 + \beta\beta_1)$-smooth. By Lemma 13 in \cite{andriushchenko2022towards}, we have
    \begin{align}
        \mathbb{E}\langle \nabla L_{\cB_2}(\btheta_t+\eta_2\cdot \nabla L_{\cB_1}(\btheta_t)), \nabla L_{\cD}(\btheta_t)\rangle \geq (1-\hat\beta\eta_2)\|L_{\cD}(\btheta_t)\|_2^2 - \frac{\hat\beta^2\eta_2^2\hat\sigma^2}{2|\cB_1|}.\label{help:1}
    \end{align}
    By Lemma 14 in \cite{andriushchenko2022towards}, denote $\btheta_{t+1/2}:=\btheta_t+\eta_2\cdot \nabla L_{\cB_1}(\btheta_t)$, then we have for all $\eta_1 \leq 1/(2\hat\beta)$ and $\eta_2 \leq 1/(2\hat\beta)$, 
    \begin{align}
        \mathbb{E}L_{\cD}(\btheta_{t+1})&\leq \mathbb{E}L_{\cD}(\btheta_t)+\eta_1^2\hat\beta\hat\sigma^2/|\cB_2| -\eta_1^2\hat\beta\mathbb{E}\|\nabla L_\cD(\btheta_t)\|_2^2 -\eta_1(1-2\eta_1\hat\beta)\mathbb{E}\langle\nabla L_\cD(\btheta_{t+1/2}), \nabla L_\cD(\btheta_{t})\rangle\notag \\
        &\quad +\eta_1^2\hat\beta\mathbb{E}\|\nabla L_\cD(\btheta_{t+1/2}) - \nabla L_\cD(\btheta_t)\|_2^2\notag\\
&\leq \mathbb{E}L_{\cD}(\btheta_t)+\eta_1^2\hat\beta\hat\sigma^2/|\cB_2| -\eta_1^2\hat\beta\mathbb{E}\|\nabla L_\cD(\btheta_t)\|_2^2 -\eta_1(1-2\eta_1\hat\beta)\mathbb{E}\langle\nabla L_\cD(\btheta_{t+1/2}), \nabla L_\cD(\btheta_{t})\rangle\notag \\
        &\quad + \eta_1^2\hat\beta\mathbb{E}\|\btheta_{t+1/2} - \btheta_t\|_2^2\notag\\
        & = \mathbb{E}L_{\cD}(\btheta_t)+\eta_1^2\hat\beta\hat\sigma^2/|\cB_2| -\eta_1^2\hat\beta\mathbb{E}\|\nabla L_\cD(\btheta_t)\|_2^2 -\eta_1(1-2\eta_1\hat\beta)\mathbb{E}\langle\nabla L_\cD(\btheta_{t+1/2}), \nabla L_\cD(\btheta_{t})\rangle\notag \\
        &\quad + \eta_1^2\hat\beta^3\eta_2^2\mathbb{E}\|\nabla L_{\cB_1}(\btheta_t)\|_2^2\notag\\
        &\leq \mathbb{E}L_{\cD}(\btheta_{t})+\eta_1^2\hat\beta\hat\sigma^2/|\cB_2| -\eta_1^2\hat\beta(1-2\hat\beta^2\eta_2^2)\mathbb{E}\|\nabla L_\cD(\btheta_t)\|_2^2 -\eta_1(1-2\eta_1\hat\beta)\mathbb{E}\langle\nabla L_\cD(\btheta_{t+1/2}), \nabla L_\cD(\btheta_{t})\rangle\notag \\
        &\quad  + 2\eta_1^2\hat\beta^3\eta_2^2\hat\sigma^2/|\cB_1|.\notag
    \end{align}
    Then substituting \autoref{help:1} and setting $\eta_1 = \eta_2 = 1/(4\hat\beta)$, we have
    \begin{align}
        &\mathbb{E}L_{\cD}(\btheta_{t+1}) - \mathbb{E}L_{\cD}(\btheta_{t}) \notag \\
        &\leq \hat\sigma^2/(16|\cB_2|\hat\beta) - 1/(16\hat\beta)\cdot 7/8\cdot \mathbb{E}\|\nabla L_\cD(\btheta_t)\|_2^2 - 1/(8\hat\beta)\cdot (3/4\cdot \mathbb{E}\|\nabla L_\cD(\btheta_t)\|_2^2 - \hat\sigma^2/(32|\cB_1|)) + \hat\sigma^2/(128\hat\beta|\cB_1|)\notag \\
        & \leq \hat\sigma^2/(64\hat\beta|\cB_1|) + \hat\sigma^2/(16\hat\beta|\cB_2|) - 1/(8\hat\beta)\cdot \mathbb{E}\|\nabla L_\cD(\btheta_t)\|_2^2.\notag
    \end{align}
    Therefore, we have
    \begin{align}
        \frac{1}{T}\sum_{t=1}^T \mathbb{E}\|\nabla L_\cD(\btheta_t)\|_2^2 \leq \frac{L_\cD(\btheta_0)}{T} + \frac{\hat\sigma^2}{16\hat\beta}(1/|\cB_2| + 1/|\cB_1|).\label{eq:fff}
    \end{align}
    Finally, by using the PL-condition we have $\|\nabla L_\cD(\btheta_t)\|_2^2 \geq 1/\mu\cdot(L_\cD(\btheta_t) - \inf_{\btheta} L_\cD(\btheta))$. Substituting it into \eqref{eq:fff} concludes the proof. 
\end{proof}

\subsection{Proof of \autoref{thm:2}}
\label{app:thm2}

\begin{algorithm}[!tbp]
\small
\caption{Random round DP-SGD}
\label{alg:modisgd}
\begin{algorithmic}[1]
\REQUIRE Training set $\mathcal{D}=\{(\boldsymbol{x},\boldsymbol{y})\}$, loss function $L_i:=\ell(f_{\btheta}(\boldsymbol{x}_i), \boldsymbol{y}_i)$, privacy parameter $(\epsilon, \delta)$, learning rate $\eta$. 
\STATE Randomly draw $R\sim \mathbb{P}$, where
\begin{align}
    \mathbb{P}(R = k+1):=\frac{1}{|\cD|^2}, k = 0,1,\dots, |\cD|^2-1.\notag
\end{align}
\FOR{$t = 0,\dots, R-1$}
\STATE Uniformly randomly draw $i$ from $1,\dots, |\cD|$
\STATE Draw $z_t\in \mathbb{R}^d$ from the following Gaussian distribution:
\begin{align}
    z^t\sim \mathcal{N}(0, 4\beta_1^2\rho^2\log(3|\cD|/\delta)\log(2/\delta)\cdot \mathbf{I})
\end{align}
\STATE Update $\btheta_{t+1}:=\btheta_t - \eta\cdot (\nabla L_i(\btheta_t) + z_t)$
\ENDFOR
\ENSURE $\btheta_{\text{out}} = \btheta_R$
\end{algorithmic}
\end{algorithm}

We present the random round DP-SGD proposed in \cite{zhang2017efficient} as Algorithm \ref{alg:modisgd}. Compared with the standard DP-SGD \cite{DPSGD}, the main difference is the random selected round number $R$ here. The use of a random round number is only due to the need of theoretical proof. Next, we state the formal version of Theorem \autoref{thm:2} as follows. 
\begin{theorem}[Formal version of \autoref{thm:2}]
\label{thm:22}
    Select the step size $\eta = \min\{1/(\rho^2\beta_2 + \beta\beta_1), D_f/(\sigma|\cD|)\}$, where
    \begin{align}
        D_f:=\sqrt{2(L_\cD(\btheta_0) - \min_{\btheta} L_\cD(\btheta))/(\rho^2\beta_2 + \beta\beta_1)},\notag\\
        \sigma:=2\beta_1\rho \sqrt{1+\frac{d\log(3|\cD|/\delta)\log(2\delta)}{\epsilon^2} },\notag
    \end{align}
    then we have the following utility bound:
        \begin{align}
        \mathbb{E}(L_\cD(\btheta_{\text{out}}) - \inf_{\btheta} L_\cD(\btheta) )\leq C\cdot \frac{\rho\sqrt{\rho^2\beta_2 + \beta\beta_1}}{|\cD|\epsilon},\notag
    \end{align}
    where $C = c\cdot\mu\beta_1\sqrt{d\log(|\cD|/\delta)\log(1/\delta)L_\cD(\btheta_0)}$, $c$ is some positive constant, $d$ is the dimension of parameter $\btheta$. 
\end{theorem}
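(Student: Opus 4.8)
The plan is to follow the two-part structure implicit in Theorem 2.2: first establish the $(\epsilon,\delta)$-privacy guarantee of the random-round DP-SGD in Algorithm~3, then derive the utility bound, adapting the analysis of \cite{zhang2017efficient} (much as Theorem~1.1 adapts \cite{andriushchenko2022towards}). The crucial preliminary input is Lemma~1.1: each per-sample loss $L_i$ is $\beta_1\rho$-Lipschitz and $(\rho^2\beta_2+\beta\beta_1)$-smooth. The Lipschitz constant governs the sensitivity of a single stochastic gradient, while the smoothness constant $\hat\beta:=\rho^2\beta_2+\beta\beta_1$ drives the optimization. Keeping these two $\rho$-dependencies separate — one \emph{linear} (sensitivity/noise) and one \emph{inside} $\hat\beta$ (smoothness) — is precisely what produces the final factor $\rho\sqrt{\rho^2\beta_2+\beta\beta_1}$.

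For privacy, I would argue per iteration first. Since $\|\nabla L_i\|_2 \le \beta_1\rho$ for every $i$, replacing one record of $\cD$ changes the sampled gradient by at most $2\beta_1\rho$ in $\ell_2$-norm, so the per-update sensitivity is $2\beta_1\rho$; the Gaussian noise $z_t$ is calibrated exactly to this (its variance carries the factor $4\beta_1^2\rho^2$). I would then combine (i) privacy amplification by subsampling, arising from the uniform draw of the index $i$ among $|\cD|$ points, with (ii) composition across the random number of rounds $R\le|\cD|^2$. The uniform random stopping time over $\{1,\dots,|\cD|^2\}$ is the device of \cite{zhang2017efficient} that lets the per-step $\log(3|\cD|/\delta)\log(2/\delta)$ factors aggregate into the stated budget, and I would invoke their accounting lemma rather than re-deriving a moments bound.

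For utility, I would start from the $\hat\beta$-smoothness of $L_\cD$. Writing $g_t=\nabla L_i(\btheta_t)+z_t$, which is unbiased for $\nabla L_\cD(\btheta_t)$, the descent inequality reads
$$\mathbb{E}[L_\cD(\btheta_{t+1})\mid\btheta_t] \le L_\cD(\btheta_t) - \eta\|\nabla L_\cD(\btheta_t)\|_2^2 + \tfrac{\hat\beta\eta^2}{2}\,\mathbb{E}\|g_t\|_2^2,$$
where $\mathbb{E}\|g_t\|_2^2$ splits into the true gradient norm, the sampling variance (bounded via the $\beta_1\rho$ Lipschitz bound), and the injected noise energy $\mathbb{E}\|z_t\|_2^2 = 4d\beta_1^2\rho^2\log(3|\cD|/\delta)\log(2/\delta)$. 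Applying the $\mu$-PL condition (Assumption~5) converts $\|\nabla L_\cD(\btheta_t)\|_2^2$ into the suboptimality gap $L_\cD(\btheta_t)-\inf_{\btheta}L_\cD(\btheta)$, producing a contraction with a constant noise floor. The uniform random round $R$ then turns the telescoped recursion into a bound on $\mathbb{E}[L_\cD(\btheta_{\text{out}})]$, and substituting $\sigma$ together with $\eta=\min\{1/\hat\beta,\,D_f/(\sigma|\cD|)\}$ balances the optimization and noise terms to yield the claimed $O\big(\rho\sqrt{\rho^2\beta_2+\beta\beta_1}/(|\cD|\epsilon)\big)$ rate.

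The hardest part will be the privacy accounting: correctly marrying subsampling amplification with composition over the \emph{random} number of rounds, so that the per-step noise with its $\log(3|\cD|/\delta)\log(2/\delta)$ factors assembles exactly into $(\epsilon,\delta)$-DP rather than a looser bound — here I expect to lean directly on the lemmas of \cite{zhang2017efficient}. The optimization side is comparatively routine, a standard PL-based noisy-SGD argument, the only subtlety being to carry the two distinct $\rho$-dependencies intact through the final step-size optimization.
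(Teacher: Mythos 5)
Your proposal is correct in outline and rests on exactly the same key inputs as the paper --- Lemma A.1 supplying the $\beta_1\rho$-Lipschitz and $(\rho^2\beta_2+\beta\beta_1)$-smoothness constants, the random-round DP-SGD of \cite{zhang2017efficient}, and the PL condition --- but you re-derive what the paper simply cites. The paper's actual proof is three lines: it invokes Theorem~4 of \cite{zhang2017efficient} wholesale for the $(\epsilon,\delta)$-DP guarantee, invokes Theorem~5 of \cite{zhang2017efficient} for a \emph{gradient-norm} bound $\mathbb{E}\|\nabla L_\cD(\btheta_{\text{out}})\|_2^2 \leq c\,\beta_1\rho\sqrt{\rho^2\beta_2+\beta\beta_1}\sqrt{d\log(|\cD|/\delta)\log(1/\delta)L_\cD(\btheta_0)}/(|\cD|\epsilon)$, and then applies the PL condition exactly once at the end to convert stationarity into the suboptimality gap (this single substitution is where the factor $\mu$ in $C$ enters). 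Your privacy sketch (sensitivity $2\beta_1\rho$ from the Lipschitz bound, subsampling amplification, composition over the random round) is essentially a summary of how Theorem~4 of \cite{zhang2017efficient} is proved, so nothing is lost there. On the utility side you instead thread PL through the per-step descent recursion to get a contraction with a noise floor; this is a workable alternative, but note a mild tension in your plan: the uniform random round $R$ is the standard device for outputting a \emph{random iterate} whose expected squared gradient norm equals the time average (the nonconvex stationarity route the paper takes), whereas a PL contraction naturally favors a last-iterate bound --- if you keep the random round you should average the unrolled recursion rather than ``telescope,'' though either way the same $O\bigl(\rho\sqrt{\rho^2\beta_2+\beta\beta_1}/(|\cD|\epsilon)\bigr)$ rate emerges after balancing with the stated $\eta$. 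What your route buys is a self-contained argument that makes the two distinct $\rho$-dependencies (linear via sensitivity, square-root via smoothness) visible; what the paper's route buys is brevity and the ability to inherit \cite{zhang2017efficient}'s privacy accounting and constants without re-verification.
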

\begin{proof}[Proof of \autoref{thm:22}]
First, by Lemma \ref{lemma:1}, we have $L_i$ is $\beta_1\rho$-Lipschitz continuous and $(\rho^2\beta_2 + \beta\beta_1)$-smooth. Then according to Theorem 5 in \cite{zhang2017efficient}, we have the following utility bound:
    \begin{align}
        \mathbb{E}\|\nabla L_\cD(\btheta_{\text{out}})\|_2^2 \leq c\cdot\bigg(\frac{\beta_1\rho\sqrt{\rho^2\beta_2 + \beta\beta_1}\sqrt{d\log(|\cD|/\delta)\log(1/\delta)L_\cD(\btheta_0)}}{|\cD|\epsilon}\bigg),\label{eq:ggg}
    \end{align}
    where $c$ is some positive constant. 
    Meanwhile, random round DP-SGD is $(\epsilon, \delta)$-DP due to Theorem 4 in \cite{zhang2017efficient}. 
    Finally, by using the PL-condition we have $\|\nabla L_\cD(\btheta_{\text{out}})\|_2^2 \geq 1/\mu\cdot(L_\cD(\btheta_{\text{out}}) - \inf_{\btheta} L_\cD(\btheta))$. Substituting it into \autoref{eq:ggg} concludes the proof. 
\end{proof}

\section{Comparison with Vanilla SAM}

In this experiment, we consider four perturbation magnitudes: 0.1, 0.5, 1.0, and 2.0. The DPML algorithm is fixed at vanilla DP-SGD and the privacy budget is fixed at $\epsilon=4$. The results of using the vanilla SAM are denoted by blue triangles, while the results of the proposed \ourmethod{} are denoted by red diamonds. The result of using a normally pre-trained model is denoted as a purple rectangle.

\label{app:eps_4}
\begin{figure*}[!ht]
\centerline{\includegraphics[width=\linewidth]{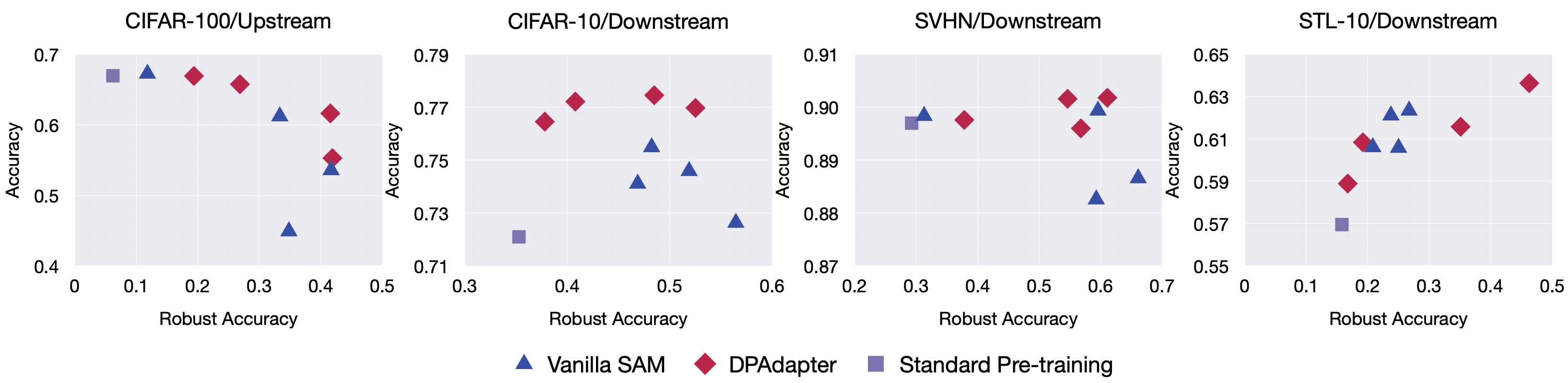}}
\caption{The results of \ourmethod{} and vanilla SAM under different perturbation magnitudes. The results of the vanilla SAM are represented by blue triangles, while the results of the proposed \ourmethod{} are depicted by red diamonds. Results obtained using a vanilla pre-trained model are illustrated by a purple rectangle. All the downstream models have the same privacy budget $\epsilon=4$.}
\label{fig:eps_4}
\end{figure*}

\end{document}